\theoremstyle{plain}
\newtheorem{theorem}{Theorem}[section]
\newtheorem{lemma}[theorem]{Lemma}
\theoremstyle{definition}
\newtheorem{definition}[theorem]{Definition}
\newtheorem{assumption}[theorem]{Assumption}
\theoremstyle{remark}
\definecolor{lightpurple}{RGB}{230,230,250}
\icmltitlerunning{REPAIR: Robust Editing via Progressive Adaptive Intervention and Reintegration}
\begin{document}

\twocolumn[
  \icmltitle{REPAIR: Robust Editing via Progressive Adaptive Intervention and Reintegration}



  \icmlsetsymbol{equal}{*}

  \begin{icmlauthorlist}
    \icmlauthor{Yisu Wang}{equal,tecsu,ucas}
    \icmlauthor{Ming Wang}{equal,neu}
    \icmlauthor{Haoyuan Song}{nyu}
    \icmlauthor{Wenjie Huang}{nus}
    \icmlauthor{Chaozheng Wang}{cuhk}
    \icmlauthor{Yi Xie}{tum}
    \icmlauthor{Xuming Ran}{nus}
  \end{icmlauthorlist}

  \icmlaffiliation{tecsu}{Technological and Engineering Center for Space Utilization of Chinese Academy of Sciences, Beijing, China}
  \icmlaffiliation{ucas}{University of Chinese Academy of Sciences, Beijing, China}
  \icmlaffiliation{neu}{School of Computer Science and Engineering, Northeastern University, Shenyang, China}
  \icmlaffiliation{nyu}{Courant Institute School of Mathematics, Computing, and Data Science, New York University, New York, USA}
  \icmlaffiliation{nus}{School of Computing, National University of Singapore, Singapore}
  \icmlaffiliation{cuhk}{The Chinese University of Hong Kong, Hong Kong, China}
  \icmlaffiliation{tum}{Technical University of Munich, Baden-Württemberg, Germany}

  \icmlcorrespondingauthor{Xuming Ran}{ranxuming@gmail.com}

  \icmlkeywords{Machine Learning, ICML}

  \vskip 0.3in
]



\printAffiliationsAndNotice{}  

\begin{abstract}
  Post-training of large language models (LLMs) is constrained by the high cost of acquiring new knowledge or correcting errors, as well as the unintended side effects that frequently arise from retraining. To address these issues, we introduce REPAIR (\textbf{R}obust \textbf{E}diting via \textbf{P}rogressive \textbf{A}daptive \textbf{I}ntervention and \textbf{R}eintegration), a lifelong editing framework designed to support precise and low-cost model updates while preserving non-target knowledge.
  REPAIR mitigates the instability and conflicts of large-scale sequential edits through a closed-loop feedback mechanism coupled with dynamic memory management. Furthermore, by incorporating frequent knowledge fusion and enforcing strong locality guards, REPAIR effectively addresses the shortcomings of traditional distribution-agnostic approaches that often overlook unintended ripple effects.
  Our experiments demonstrate that REPAIR boosts editing accuracy by 10\%--30\% across multiple model families and significantly reduces knowledge forgetting. This work introduces a robust framework for developing reliable, scalable, and continually evolving LLMs. The code is available at \url{https://github.com/sci-m-wang/REPAIR}.
\end{abstract}

\section{Introduction}
\begin{figure}[t]
  \centering
  \includegraphics[width=\columnwidth]{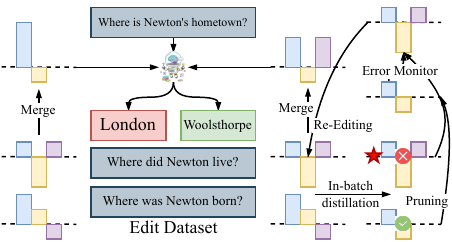}
  \caption{\textbf{Problems and our solutions}. REPAIR achieves closed-loop feedback, fine-grained knowledge integration, weighted knowledge merging, and robust editing performance.}
  \label{fig:wrap_example}
\end{figure}
Large language models (LLMs) have demonstrated remarkable capabilities across diverse tasks. However, their inherent rigidity prevents them from autonomously updating knowledge after pre-training, rendering them unable to correct errors (\emph{e.g.}, hallucinations or outdated facts) or integrate new information. Consequently, lifelong model editing has emerged as a critical research paradigm. It aims to enable continuous, efficient, and low-cost local updates that ensure models remain accurate and relevant over time \cite{Wang2024KESurvey}. In contrast to full retraining or broad fine-tuning, editing focuses on \emph{precisely fine-grained} modifications that preserve unrelated competencies while delivering immediate corrections at deployment time.

Despite steady progress, important gaps remain, as shown in Figure~\ref{fig:wrap_example}. \textbf{(1) Large-scale sequential editing \& coarse knowledge fusion}. As edits accumulate, models can exhibit routing instability, conflicts among edits, and even collapse. Thus, stabilizing sequential updates without broad side effects remains challenging \cite{Gupta2024RebuildROME,Cohen2024Ripple}. Semi-parametric designs (\emph{e.g.}, SERAC \cite{Mitchell2022SERAC}) and discrete key–value adaptors (\emph{e.g.}, GRACE \cite{Hartvigsen2023GRACE}) alleviate some failure modes and support long edit streams, but still face scope and auditing trade-offs \cite{Mitchell2022SERAC,Hartvigsen2023GRACE}. The strategy for knowledge fusion remains underexplored, despite being the stage most prone to information loss \cite{Wang2024WISE}. \textbf{(2) Few-shot editing}. Under data-scarce conditions, editors often struggle to form robust, generalizable changes beyond the exact prompt, motivating gradient-transformation editors trained for locality (\emph{e.g.}, MEND \cite{Mitchell2022MEND}) and broader taxonomies of edit generalization \cite{Mitchell2022MEND,Wang2024KESurvey}. \textbf{(3) Open-loop and distribution-agnostic learning}. Many pipelines operate without reflective feedback, optimize on indiscriminate batches, and under-stress-test ripple effects on related knowledge and reasoning, calling for tighter evaluation and integration mechanisms \cite{Cohen2024Ripple,Wang2024KESurvey}. Overall, these issues reveal a fundamental trade-off among reliability, specificity, and scalability that any practical editing system must reconcile.

\newcommand{\cmark}{\textcolor{cyan}{\ding{51}}}   
\newcommand{\xmark}{\textcolor{red}{\ding{55}}}    

To address these challenges, we propose the framework named REPAIR (\textbf{R}obust \textbf{E}diting via \textbf{P}rogressive \textbf{A}daptive \textbf{I}ntervention and \textbf{R}eintegration), with targeted strategies: \textbf{(1) Closed-loop feedback with dynamic memory management} that monitors edit performance and selectively reinitializes underperforming modules to stabilize routing and consolidation at scale. Concretely, our controller triggers health checks after each edit window and performs scoped resets or compaction when drift is detected. \textbf{(2) Distribution-aware optimization} that reorganizes samples by similarity and applies inner-batch distillation to enhance consistency and robustness in few-shot settings, encouraging edits to generalize across paraphrases and nearby contexts rather than overfitting to single prompts. \textbf{(3) Frequent knowledge fusion} that increases fusion cadence to prevent information loss and ensure timely consolidation of new and existing knowledge, with guardrails that validate locality before integration to avoid unintended side effects.

We compare \textsc{REPAIR} with several foundational model editing methods across three dimensions: \textit{Memory}, \textit{Attributes}, and \textit{Behaviors} (Table \ref{tab:model_comparison}). Its core innovation lies in integrating a dual memory system with parametric editing, complemented by error feedback, inner-batch knowledge distillation, and loss-aware subspaces merging. This design achieves high success rates and broad editing coverage while minimizing side effects. In contrast, previous methods struggle with knowledge overlap and loss, particularly in sequential editing, where large differences between adjacent samples hinder effective correction. Table \ref{Tab:case study} showcases cases where \textsc{REPAIR} outperforms baselines, offering a better balance of Reliability, Generalization, and Locality.

\begin{table*}[h]\scriptsize
  \centering
  \setlength{\tabcolsep}{2pt}
  \caption{\textbf{Comparison of current model editing methods.} ``\checkmark'' refers to ``yes'' and ``well-supported'',
    ``$\times$'' refers to ``no'' or ``badly-supported'', and ``$\bigcirc$'' refers to ``less-supported''.
  The three metrics of Reliability, Generalization, and Locality denote the performance on lifelong editing.}
  \label{tab:model_comparison}
  \begin{tabular}{l|ccc|cccc|cc}
    \toprule
    & \multicolumn{3}{c|}{\textbf{Memory}} & \multicolumn{4}{c|}{\textbf{Attributes}} & \multicolumn{2}{c}{\textbf{Behaviors}} \\
    \cmidrule(lr){2-4}\cmidrule(lr){5-8}\cmidrule(l){9-10}
    Methods & \makecell{Long-term\\Memory} & \makecell{Working\\Memory} & Parametric & Lifelong & Reliability & Generalization & Locality & \makecell{Error\\Feedback} & \makecell{Knowledge\\Distillation} \\
    \midrule
    FT-EWC \cite{Kirkpatrick2017EWC} & $\checkmark$ & $\times$ & $\checkmark$ & $\checkmark$ & $\checkmark$ & $\checkmark$ & $\times$ & $\times$ & $\times$ \\
    ROME \cite{meng2022locating}       & $\checkmark$ & $\times$ & $\checkmark$ & $\times$ & $\times$     & $\times$     & $\times$ & $\times$ & $\times$ \\
    MEMIT \cite{Meng2023MEMIT}         & $\checkmark$ & $\times$ & $\checkmark$ & $\times$ & $\times$     & $\times$     & $\times$ & $\times$ & $\times$ \\
    MEND \cite{Mitchell2022MEND}       & $\checkmark$ & $\times$ & $\checkmark$ & $\times$ & $\times$     & $\times$     & $\times$ & $\times$ & $\times$ \\
    DEFER \cite{Mitchell2022SERAC}     & $\times$     & $\checkmark$ & $\checkmark$ & $\checkmark$ & $\bigcirc$ & $\times$ & $\times$ & $\times$ & $\times$ \\
    GRACE \cite{Hartvigsen2023GRACE}   & $\times$     & $\checkmark$ & $\times$ & $\checkmark$ & $\checkmark$     & $\times$ & $\checkmark$ & $\times$ & $\times$ \\
    WISE \cite{Wang2024WISE}           & $\checkmark$ & $\checkmark$ & $\checkmark$ & $\checkmark$ & $\checkmark$ & $\checkmark$ & $\checkmark$ & $\times$ & $\times$ \\
    \midrule
    REPAIR                              & $\checkmark$ & $\checkmark$ & $\checkmark$ & $\checkmark$ & $\checkmark$ & $\checkmark$ & $\checkmark$ & $\checkmark$ & $\checkmark$ \\
    \bottomrule
  \end{tabular}
\end{table*}

\begin{table}[t]
  \centering
  \caption{\textbf{Failure case study}. Previous baselines (\citep{Wang2024WISE}, \citep{Hartvigsen2023GRACE}) often encounter issues of repeating answers from previous questions and difficulty in correcting adjacent knowledge during editing.}
  \label{Tab:case study}
  \scriptsize
  \setlength{\tabcolsep}{2pt}
  \renewcommand{\arraystretch}{1.1}
  \begin{tabularx}{0.95\columnwidth}{>{\raggedright\arraybackslash}X >{\raggedright\arraybackslash}p{0.12\columnwidth} >{\raggedright\arraybackslash}p{0.18\columnwidth} >{\raggedright\arraybackslash}p{0.15\columnwidth}}
    \toprule
    \textbf{Method/Prompt} & \textbf{Edit Target} & \textbf{Post-Edit Output} & \textbf{Metrics} \\
    \midrule
  a) The genus Platypatrobus is part of the family? & Arctiinae & Arctiuc {\xmark} & Reliability{\xmark} \\
b) \textit{The genus Platypatrobus is a part of what family} & -- & Yemen \textcolor{red}{\xmark} & Generalization{\xmark} \\
\rowcolor{green}
c) The genus Platypatrobus is part of the family? & -- & Arctiinae {\checkmark} & \\
\midrule
c) \textit{When was the IAAF Combined Events Challenge launched?} & 2006 & Armand \textcolor{red}{\xmark} & Reliability{\xmark} \\
d) \textit{When does season 5 of ruby come out?} & October 14, 2017 & 2006 \textcolor{red}{\xmark} & Locality{\xmark} \\
\rowcolor{green}
e) \textit{when does season 5 of ruby come out?} & -- & 2017 {\checkmark} & \\

\bottomrule

\end{tabularx}
\end{table}

In summary, the main contributions are as follows.
\begin{itemize}[itemsep=0pt,parsep=0pt,topsep=0pt,partopsep=0pt]
\item We identify three critical challenges in model editing: (1) instability under large-scale sequential edits, (2) limited generalization in few-shot scenarios, and (3) inefficiency in open-loop, distribution-agnostic pipelines.
\item We propose REPAIR, a novel framework to address these challenges by integrating a dual-memory system with parametric editing. It introduces closed-loop error feedback, distribution-aware optimization, and loss-aware subspaces merging to ensure robust and precise updates.
\item We validate the performance of REPAIR across diverse models (including LLaMA-3, Qwen-2.5, DeepSeek‑R1‑1.5B, and GPT-2-XL), demonstrating a 15\%--20\% improvement in overall editing performance over state-of-the-art methods and showing consistent, robust generalization.
\end{itemize}

\section{Methodology}

We propose a novel closed-loop lifelong model editing framework, denoted \textbf{REPAIR},
which addresses the limitations of open-loop editing in distributed side-memory methods. Our framework, as shown in Figure \ref{fig:overall_structure}, integrates (1) closed-loop error feedback with dynamic memory management; (2) distribution-aware batch reassembly with inner-batch knowledge distillation; (3) loss-aware weighted knowledge merging.

\begin{figure*}
\centering
\includegraphics[width=0.9\linewidth]{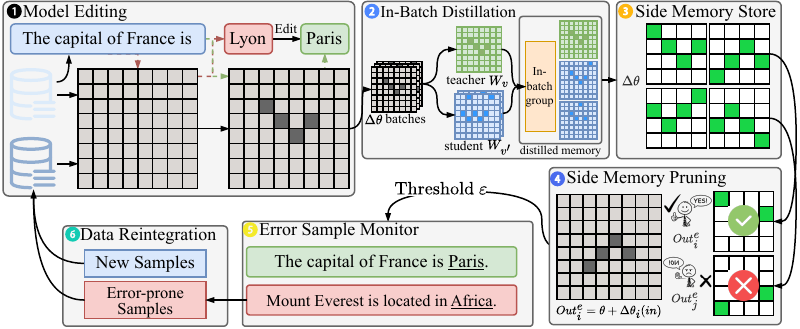}
\caption{\textbf{The overall structure of REPAIR.} An edit, such as changing the capital of France from "Lyon" to "Paris," is stored as a parameter update, $\Delta\theta$, in the Side Memory. An Error Sample Monitor evaluates the performance of each edit ($Out_i^e$). If the error rate, $Err_{thresh}$, for an edit on a new sample exceeds a threshold $\epsilon$, the Side Memory Pruning module removes the erroneous update. The system then reintegrates new and error-prone samples for continuous learning.}
\label{fig:overall_structure}
\end{figure*}

\subsection{Problem Setup}

\begin{definition}[Lifelong Model Editing]
\label{def:lifelong_editing}
Given a pre-trained model \( f_{\theta_0}(y|x) \), a sequential edit stream \( \{\mathcal{E}_t\}_{t=1}^T \) where \( \mathcal{E}_t = \{(x_i^{(t)}, y_i^{(t)})\}_{i=1}^N \), and auxiliary distributions \( \mathcal{G}(x) \) (paraphrased inputs) and \( \mathcal{U} \) (unrelated contexts), the objective is to obtain updated parameters \( \theta_T \) that optimize the multi-objective trade-off:
{\small
\begin{equation}
\begin{aligned}
\theta_t = \arg\min_{\theta}\; &
\alpha\,\frac{1}{N}\sum_{i=1}^N \ell\bigl(f_\theta(\cdot\mid x_i^{(t)}), y_i^{(t)}\bigr) \\
&+ \beta\,\frac{1}{N}\sum_{i=1}^N \mathbb{E}_{x' \sim \mathcal{G}(x_i^{(t)})}
\Big[\ell\bigl(f_\theta(\cdot\mid x'), y_i^{(t)}\bigr)\Big] \\
&+ \gamma\,\mathbb{E}_{x \sim \mathcal{U}}
\Big[\mathrm{KL}\bigl(f_{\theta_{t-1}}(\cdot\mid x) \parallel f_\theta(\cdot\mid x)\bigr)\Big] \\
&+ R(\theta, \theta_{t-1}).
\end{aligned}
\end{equation}
}

where \( (\alpha, \beta, \gamma) \) are hyperparameters controlling the reliability-generalization-locality-stability trade-off, and \( R \) denotes a regularization term enforcing parameter smoothness across sequential edits.
\end{definition}

\subsection{Dual Memory Mechanism and Routing}

As shown in Figure \ref{fig:overall_structure}, block 1: For dual memory-based editing methods, the dual memory mechanism is typically deployed in the deep layers of the network.
Specifically, for the value matrix $\mathbf{W}_v$ of the target FFN layer, we create a copy as the side memory pool $M_s$, i.e.:
\(M_s^{(0)} = W_v.\)
If the side memory pool is activated, the output is computed as:
\(o_s = \phi(f^TW_k) \cdot M_s\),
where $\phi$ denotes the non-linear activation function, and $o_s$ represents the FFN output based on the side memory (\cite{Wang2024WISE}).

During the inference phase, for memory pool $i$, the activation score is defined as
\begin{equation}
\Delta^{(i)}_{\mathrm{act}}(x) = \| \mathcal{A}(x)\cdot (W_{v,i}' - W_v)\|_2.
\label{eq:activation-diff}
\end{equation}
where $\mathcal{A}(\cdot)$ = $a$ is the activation of the side memory's corresponding FFN layer. Routing selects the pool with the activation score.
If $\max_i \Delta^{(i)}_{\mathrm{act}}(x) \le \varepsilon$, the main memory $W_v$ is used. Otherwise, the side memory pool $M_s$ is selected. To enforce discriminative routing, we use a margin-based loss. The objective of the routing mechanism is to establish a clear decision boundary:
\begin{equation}
\min_{\mathbf{x}_e \sim \mathcal{E}} \mathcal{R}(\mathbf{x}_e)\sim\min_{\mathbf{x'} \sim \mathcal{U}} \mathcal{R}(\mathbf{x'}) > \tau > \max_{\mathbf{x}_i \sim \mathcal{G}} \mathcal{R}(\mathbf{x}_i)
\label{eq:activation-score}
\end{equation}
where $\tau$ is a preset threshold, and $\mathcal{E}$ and $\mathcal{G}_i$ represent the edit and edit-irrelevant datasets, respectively. This selective activation mechanism ensures that edited knowledge is only retrieved in relevant contexts, thereby minimizing interference with the original model's performance.

\subsection{Distribution-Aware inner-batch Knowledge Distillation}
As shown in Figure \ref{fig:overall_structure} block 2: A sample batch $\mathcal{E} = \{x_1, x_2, \dots, x_n\}$,
and we denote the feature representations by
\(o_i = \mathrm{Norm}(f_\theta(x_i)),\quad i=1,\dots,n.\)
To improve the consistency and stability of updates during edits, we organize samples into homogeneous batches and perform intrabatch knowledge distillation.
Samples with high mutual similarity are grouped into a batch $B = \{x^{(0)}, x^{(1)}, \dots, x^{(b-1)}\}$. Within each batch, the first sample $x^{(0)}$ acts as a \textit{teacher}, while the remaining samples are \textit{students}.
We define the inner-batch distillation loss as:
\begin{equation}
\mathcal{L}_{\text{kd}} =\lambda \cdot \mathcal{L}_{\text{cosine}} + \theta \cdot \mathcal{L}_{\text{variance}}
\end{equation}
where \(\mathcal{L}_{\text{cosine}} = 1 - \frac{o_i \cdot o_0}{\| o_i \| \| o_0 \|}\) and \(\mathcal{L}_{\text{variance}} =\frac{1}{N} \sum_{i=1}^{N} \| o_i - o_{\text{mean}} \|^2\).
Minimizing $\mathcal{L}_{\mathrm{kd}}$ encourages all samples in the batch to share similar knowledge, which in turn reduces potential conflicts when updating the same network parameters $\theta$. The regularization term is used to maintain diversity among features, preventing excessive uniformity.

If certain samples cannot be well-aligned with the batch (i.e., their $\mathcal{L}_{\mathrm{kd}}$ remains high after optimization), this indicates that they do not belong to the same distribution cluster and are unlikely to be effectively edited together. Such samples are removed from the batch and reclustered with other samples to form new homogeneous groups. Formally, the final batch reassembly can be expressed as
\begin{equation}
\mathcal{B}^* = \mathrm{Recluster}\big( \{ x \in B \mid \mathcal{L}_{\mathrm{kd}}(x, B) < \epsilon \} \big),
\end{equation}
where $\epsilon$ is a threshold controlling inner-batch consistency. This procedure ensures that sequential parameter edits are performed on groups of samples with aligned knowledge, improving both stability and effectiveness of the model update. The convergence proof is provided in the Appendix \ref{thm:rgd-sphere} and Appendix \ref{Finite-Time Convergence}.

\subsection{Closed-Loop Error Feedback and Memory Pruning}

As shown in Figure \ref{fig:overall_structure} block 4: After each editing cycle, we evaluate the performance in a feedback pool $\mathcal{E}$ of error response samples by comparing to the correctness threshold $\tau_{\text{correct}}$. For each shard $i$, we define the error set $\mathcal{E}_i = \{x \in \mathcal{E} \mid i^*(x) = i\}$ and compute the error rate $r_i^{\text{pool}}$ for each side memory pool, defined as the proportion of failed edits within the corresponding sample set:
\(r_i^{\text{pool}} = \frac{|\{x \in \mathcal{E}_i \mid a(x) \leq \tau_{\text{correct}}\}|}{|\mathcal{E}_i|}\)

When the pruning conditions are met ($r_i > \tau_{\text{prune}}$ or $|\mathcal{E}| > \tau_E$), we execute the following procedure:
\textbf{(1) Memory pool screening \& pruning:} Identify the side memory pool with the highest error rate $j = \arg\max_i r_i^{\text{pool}}$. Remove the identified memory pool from the system.
\textbf{(2) Sample reintegration \& retraining:} Recombine the remaining error samples to form a new training set $\mathcal{E}_{\text{retrain}}$. Retrain the new side memory pools using $\mathcal{E}_{\text{retrain}}$.
This closed-loop feedback mechanism enables the system to dynamically identify and eliminate underperforming memory units while optimizing the overall editing performance through sample reorganization and iterative retraining. The time-convergence proof is provided in the Appendix \ref{Finite-Time Convergence}.

\subsection{Merging with Weighted TIES}
As shown in Figure \ref{fig:overall_structure} block 3: After multiple updates, shards $\{W_{v,i}'\}$ produce deltas $\tau_i = W_{v,i}'-W_v$.
We merge them with the weighted TIES \cite{yadav2023tiesmerging} operator based on:
\(W_v' \leftarrow W_v + \omega_i\operatorname{TIES}\big(\{\tau_i\}_{i=1}^k; W_v\big)\).

The total loss integrates all components:
\begin{equation}
\mathcal{L}_{\mathrm{total}} = \mathcal{L}_{\mathrm{edit}}
+ \lambda_a \mathcal{L}_a
+ \lambda_{\mathrm{KD}}\mathcal{L}_{\mathrm{KD}}.
\end{equation}
\(\mathcal{L}_{\mathrm{edit}}\) is the autoregressive cross-entropy.
\(\mathcal{L}_{\mathrm{edit}}(W_v') = -\log P_{W_v'}(y\mid x)\).
To enforce discriminative routing, we use a margin-based loss:
\begin{equation}
\begin{aligned}
\mathcal{L}_a &= \min\Bigl\{ \max(0, \Delta_{\text{act}}(x_i) - \gamma_1) \\
&\quad + \max(0, \gamma_2 - \Delta_{\text{act}}(x_e)) \\
&\quad + \max\bigl(0, \gamma-(\Delta_{\text{act}}(x_e) - \Delta_{\text{act}}(x_i))\bigr) \Bigr\}.
\end{aligned}
\end{equation}
For shard $i$, consider $k$ subspaces $\{\theta_1, \dots, \theta_k\}$, each trained on a subset of samples $\mathcal{E}_i$. Let the average training loss of subspace $\theta_i$ be
\(\mathcal{L}_i = \frac{1}{|\mathcal{E}_i|} \sum_{(x,y) \in \mathcal{E}_i} \ell(f(x;\theta_i), y)\),
where $\ell(\cdot)$ is the task loss. We define the merging weight of each subspace as \(w_i = \frac{\exp(-\alpha \mathcal{L}_i)}{\sum_{j=1}^M \exp(-\alpha \mathcal{L}_j)}\), with $\alpha>0$ controlling sensitivity to the loss. The global network parameters are then obtained via weighted averaging:
\(\theta = \sum_{i=1}^M w_i \, \theta_i\). This loss-aware merging favors subspaces that achieve lower training loss on their corresponding samples, promoting reliable knowledge integration.

\section{Experiments}
In the experimental section, we design six evaluations to answer the following questions: \textbf{Q1}: Do the three key innovations (closed-loop feedback, discriminative pruning, and distribution reintegration) improve edit accuracy, generalization, and locality? \textbf{Q2}: Does the method generalize well to knowledge-intensive tasks such as question answering and hallucination mitigation? \textbf{Q3}: Is the method effective across different parameter scales and diverse architectures, including recent open-source models? \textbf{Q4}: Under distribution shift (e.g., on the WikiBigEdit dataset), does the method remain robust and outperform existing methods? \textbf{Q5}: Can the method maintain long-term stability and reliability in large-scale sequential editing scenarios? \textbf{Q6}: What are the contributions and sensitivities of each component and hyperparameter to overall performance?


\subsection{Experimental Setup}
\textbf{Datasets and Models}. Autoregressive LLMs are ideal for evaluating model editing due to their unidirectional causal structure, which allows predictable and traceable edits. This ensures clear interpretability of edit generalization and locality. We evaluate widely used models (LLaMA-3-8B, GPT-2-XL) and recent models (Qwen2.5-7B, DeepSeek‑R1‑1.5B), using datasets such as ZsRE for closed-book QA, WikiBigEdit for editing performance, and a hallucination dataset to assess generalization. For more details, refer to Appendix \ref{Dataset statistics}.

\noindent\textbf{Baselines.} We consider three families of methods: \textbf{Direct Parameter Editors} that directly modify model weights (e.g., \textbf{ROME} \cite{Gupta2024RebuildROME}, \textbf{MEMIT} \cite{Meng2023MEMIT}, \textbf{MEMIT-mass} \cite{Meng2023MEMIT}); \textbf{Hypernetwork-Based Editors} that use an auxiliary network to generate parameter updates at inference (e.g., \textbf{MEND} \cite{Mitchell2022MEND}); and \textbf{External Memory-Based Editors} that store edits in external memory and retrieve them via a routing mechanism (e.g., \textbf{SERAC} \cite{Mitchell2022SERAC}, \textbf{GRACE} \cite{Hartvigsen2023GRACE}, \textbf{WISE} \cite{Wang2024KESurvey}).

\noindent\textbf{Implementation Details.} Experiments were conducted simultaneously using two GPUs: an A100 PCIe 80GB and an A100 SXM4 40GB. The code was implemented based on PyTorch 2.1, with modifications built upon the original EasyEditor framework. We provide the code to facilitate reproducibility at \url{https://anonymous.4open.science/r/REPAIR}. The specific hyperparameter settings are detailed in Appendix \ref{app:experiment_details}.

\noindent\textbf{Evaluation Metrics.}
Each edited corpus instance comprises three components: the descriptor $k_e$ used to perform the edit, an irrelevant prompt-answer pair $k_e'$ to verify locality and a rephrase prompt $k_{loc}$ to evaluate generalization performance across different expressions. To comprehensively evaluate the optimization capability of the proposed method in addressing the continual learning trilemma, we employ four metrics—edit accuracy:
\(\textbf{Rel} = \frac{1}{N} \sum_{n=1}^{N} \mathbf{l}(f_{\omega_N}(\mathbf{x}_e^n) = \mathbf{y}_e^n)\),
rephrase accuracy :
\(\textbf{Gen} = \frac{1}{N} \sum_{n=1}^{N} \mathbf{l}(f_{\omega_N}(\mathbf{x'}_e^n) = \mathbf{y}_e^n)\),
locality :
\(\textbf{Loc} = \frac{1}{N} \sum_{n=1}^N \mathbf{l}(f_{\omega_N}(\mathbf{x}_{\text{loc}}^n) = f_{\omega_0}(\mathbf{x}_{\text{loc}}^n)\).
We use the geometric mean of Rel., Gen., and Loc. to evaluate the overall editing performance, which balances metric sensitivity and interpretability, exhibits sensitivity to weak performance areas, and is suitable for scenarios where all three metrics are equally important.
\(\textbf{OP} = \sqrt[3]{\text{Rel.} \times \text{Gen.} \times \text{Loc.}}\)
to assess the holistic editing effectiveness. Here, $\mathbf{l}(\cdot)$ is the indicator function used to count the number of successful predictions.

For the hallucination dataset specifically, we utilize perplexity (PPL) as the metric to assess editing performance. PPL can be interpreted as the "average branching factor in predicting the next token," where a lower value indicates more accurate model predictions and suggests a reduced likelihood of the edited model generating hallucinations.
\(\textbf{PPL} = \exp\left(-\frac{1}{N} \sum_{i=1}^{N} \log P(y_i | \text{context}_i)\right)\)

\subsection{Main Results}

\begin{table*}[h]
\caption{Comparative results for QA on multi-scale editing (ZsRE and WikiBigEdit) $N$: Num Edits.}
\label{Table 3}
\centering
\small
\setlength{\tabcolsep}{2pt}
\begin{tabular}{lccc|c|ccc|c|ccc|c|ccc|c|}
\toprule
\multirow{2}{*}{\textbf{Method}} & \multicolumn{4}{c}{$N = 1$} & \multicolumn{4}{c}{$N = 30$} & \multicolumn{4}{c}{$N = 120$} & \multicolumn{4}{c}{$N = 1000$} \\
\cmidrule{2-17}

& Rel. & Gen. & Loc. & OP. & Rel. & Gen. & Loc. & OP. & Rel. & Gen. & Loc. & OP. & Rel. & Gen. & Loc. & OP. \\
\midrule
& \multicolumn{15}{c}{LLaMA-3-8B (ZsRE)} \\
\cmidrule{1-17}
FT-L & 0.57 & 0.52 & 0.96 & 0.66 & 0.35 & 0.35 & 0.52 & 0.39 & 0.29 & 0.26 & 0.21 & 0.25 & 0.19 & 0.15 & 0.02 & 0.08 \\
FT-EWC & 0.96 & \textbf{0.93} & 0.02 & 0.26 & 0.78 & 0.76 & 0.02 & 0.23 & 0.76 &\textbf{0.76}& 0.08 & 0.36 & 0.69 &\textbf{0.67} & 0.08 & 0.33 \\
MEND & 0.95 & 0.93 & 0.96 & 0.95 & 0.24 & 0.25 & 0.18 & 0.22 & 0.08 & 0.07 & 0.00 & 0.00 & 0.00 & 0.00 & 0.00 & 0.00 \\
ROME & 0.85 & 0.80 & 0.99 & 0.88 & 0.61 & 0.60 & 0.68 & 0.63 & 0.22 & 0.22 & 0.04 & 0.12 & 0.01 & 0.01 & 0.01 & 0.01 \\
MEMIT-M & 0.84 & 0.81 & 0.99 & 0.88 & 0.73 & 0.72 & 0.95 & 0.79 & 0.70 & 0.65 & 0.82 & 0.72 & 0.63 & 0.63 & 0.62 & 0.63 \\
DEFER & 0.68 & 0.58 & 0.56 & 0.61 & 0.65 & 0.47 & 0.36 & 0.49 & 0.20 & 0.12 & 0.27 & 0.20 & 0.03 & 0.03 & 0.74 & 0.27 \\
GRACE & \textbf{0.97} & 0.36 & \textbf{1.00} & 0.71 & \textbf{0.96} & 0.17 & \textbf{1.00} & 0.55 & \textbf{0.94} & 0.14 & \textbf{1.00} & 0.51 & \textbf{0.93} & 0.08 & \textbf{1.00} & 0.42 \\
WISE & 0.94 & 0.92 & \textbf{1.00} & \textbf{0.95} & 0.62 & 0.60 & 0.86 & 0.68 & 0.57 & 0.58 & 0.87&0.66& 0.45 & 0.44 & 0.51 & 0.47\\
\cmidrule{1-17}
\rowcolor{lightpurple}
\textbf{REPAIR} & 0.94 & 0.92 & \textbf{1.00} & \textbf{0.95} & 0.93 & \textbf{0.90} & 0.87 & \textbf{0.89$\uparrow$} & 0.76 & 0.74 & \textbf{1.00} & \textbf{0.83$\uparrow$} & 0.68 & 0.65 & 0.89 & \textbf{0.73$\uparrow$} \\
\midrule
& \multicolumn{15}{c}{Qwen2.5-7B (ZsRE)} \\
\cmidrule{1-17}
FT-L & 0.68 & 0.63 & 0.93 & 0.74 & 0.28 & 0.23 & 0.44 & 0.30 & 0.13 & 0.11 & 0.10 & 0.11 & 0.08 & 0.06 & 0.02 & 0.05 \\
FT-EWC & 0.97 & 0.92 & 0.05 & 0.35 & 0.82 & 0.80 & 0.02 & 0.24 & 0.71 & 0.69 & 0.05 & 0.29 & 0.58 & 0.56 & 0.03 & 0.21 \\
MEND & 0.96 &\textbf{0.95} & 0.96 & 0.96 & 0.31 & 0.31 & 0.27 & 0.29 & 0.15 & 0.14 & 0.03 & 0.09 & 0.02 & 0.02 & 0.00 & 0.00 \\
ROME & 0.90 & 0.89 & 0.99 & 0.93& 0.77 & 0.73 & 0.52 & 0.66 & 0.31 & 0.28 & 0.03 & 0.14 & 0.01 & 0.02 & 0.00 & 0.00\\
MEMIT-M & 0.84 & 0.81 & 0.99 & 0.88 & 0.73 & 0.72 & 0.95 & 0.79 & 0.70 & 0.65 & 0.82 & 0.72 & 0.52 & 0.51 & 0.57 & 0.53 \\
DEFER & 0.74 & 0.67 & 0.88 & 0.76 & 0.58 & 0.51 & 0.44 & 0.51 & 0.22 & 0.21 & 0.43 & 0.27 & 0.14 & 0.08 & 0.25 & 0.14 \\
GRACE & 0.97 & 0.41 & 0.98 & 0.73 & \textbf{0.97} & 0.2 & \textbf{1.00} & 0.58 & \textbf{0.95} & 0.08 & \textbf{0.98} & 0.42 & \textbf{0.94} & 0.02 & \textbf{1.00} & 0.27 \\
WISE & 0.97 & \textbf{0.95} & 0.98 & 0.97 & 0.79 & 0.73 & 0.91 & 0.80 & 0.59 & 0.57 & 0.92&0.68& 0.44 & 0.41& 0.72 & 0.51\\
\cmidrule{1-17}
\rowcolor{lightpurple}
\textbf{REPAIR} &\textbf{0.98}  &\textbf{0.95} & \textbf{1.00} & \textbf{0.98 $\uparrow$} & 0.93 & \textbf{0.90} & 0.93 & \textbf{0.92$\uparrow$} & 0.81& \textbf{0.80} & 0.92 & \textbf{0.84$\uparrow$} & 0.72 & \textbf{0.70} & \textbf{0.67} & \textbf{0.69$\uparrow$} \\
\midrule
& \multicolumn{15}{c}{DeepSeek‑R1‑1.5B (WikiBigEdit) } \\
\cmidrule{1-17}
FT-L        & 0.71 & 0.68 & 0.93 & 0.77 & 0.26 & 0.20 & 0.76 & 0.34 & 0.13 & 0.11 & 0.37 & 0.17 & 0.02 & 0.02 & 0.08 & 0.03\\
FT-EWC      & 0.93 & 0.91 & 0.33 & 0.65 & 0.70 & 0.70 & 0.18 & 0.45 & 0.42 & 0.41 & 0.07 & 0.23 & 0.18 & 0.15 & 0.02 & 0.08\\
MEND        & 0.91 & 0.87 & 0.95 & 0.91 & 0.43 & 0.38 & 0.10 & 0.25 & 0.24 & 0.23 & 0.08 & 0.16& 0.03 & 0.03 & 0.02 & 0.05 \\
ROME        & 0.86 & 0.83 & 0.97 & 0.88 & 0.72 & 0.71 & 0.67 & 0.70 & 0.18 & 0.18 & 0.02 & 0.09& 0.01 & 0.0 & 0.01 & 0.00\\
MEMIT-M  & 0.86 & 0.87 & 0.97 & 0.90 & 0.78 & 0.77 & 0.82 & 0.79 & 0.54 & 0.51 & 0.77 & 0.60 & 0.38 & 0.38 & 0.62 & 0.45\\
DEFER       & 0.68 & 0.58 & 0.47 & 0.35 & 0.63 & 0.61 & 0.51 & 0.58 & 0.17 & 0.15 & 0.33 & 0.20 & 0.07 & 0.07 & 0.12 & 0.08\\
GRACE       & 0.96 & 0.47 & \textbf{0.99} & 0.76 & \textbf{0.93} & 0.24 & \textbf{0.91}& 0.59 & \textbf{0.76} & 0.13 & 0.89 & 0.44 & 0.63 & 0.07 & \textbf{0.81} & 0.33\\
WISE        & 0.89 & 0.91 & 0.98 & 0.93 & 0.76 & 0.74 & 0.89 & 0.79 & 0.64 & 0.65 & 0.83 & 0.70 & 0.47 & 0.38 & 0.61 & 0.48\\
\midrule
\rowcolor{lightpurple}
\textbf{REPAIR}     & \textbf{0.98} & \textbf{0.93} & 0.98 & \textbf{0.96$\uparrow$} & 0.84 & \textbf{0.83} & \textbf{0.91} & \textbf{0.86$\uparrow$} & 0.71 & \textbf{0.69} & \textbf{0.90} & \textbf{0.76$\uparrow$} & \textbf{0.58} & \textbf{0.54} & \textbf{0.81} & \textbf{0.63$\uparrow$}\\
\bottomrule
\end{tabular}
\end{table*}

\begin{figure}[t]
\centering
\includegraphics[width=\columnwidth]{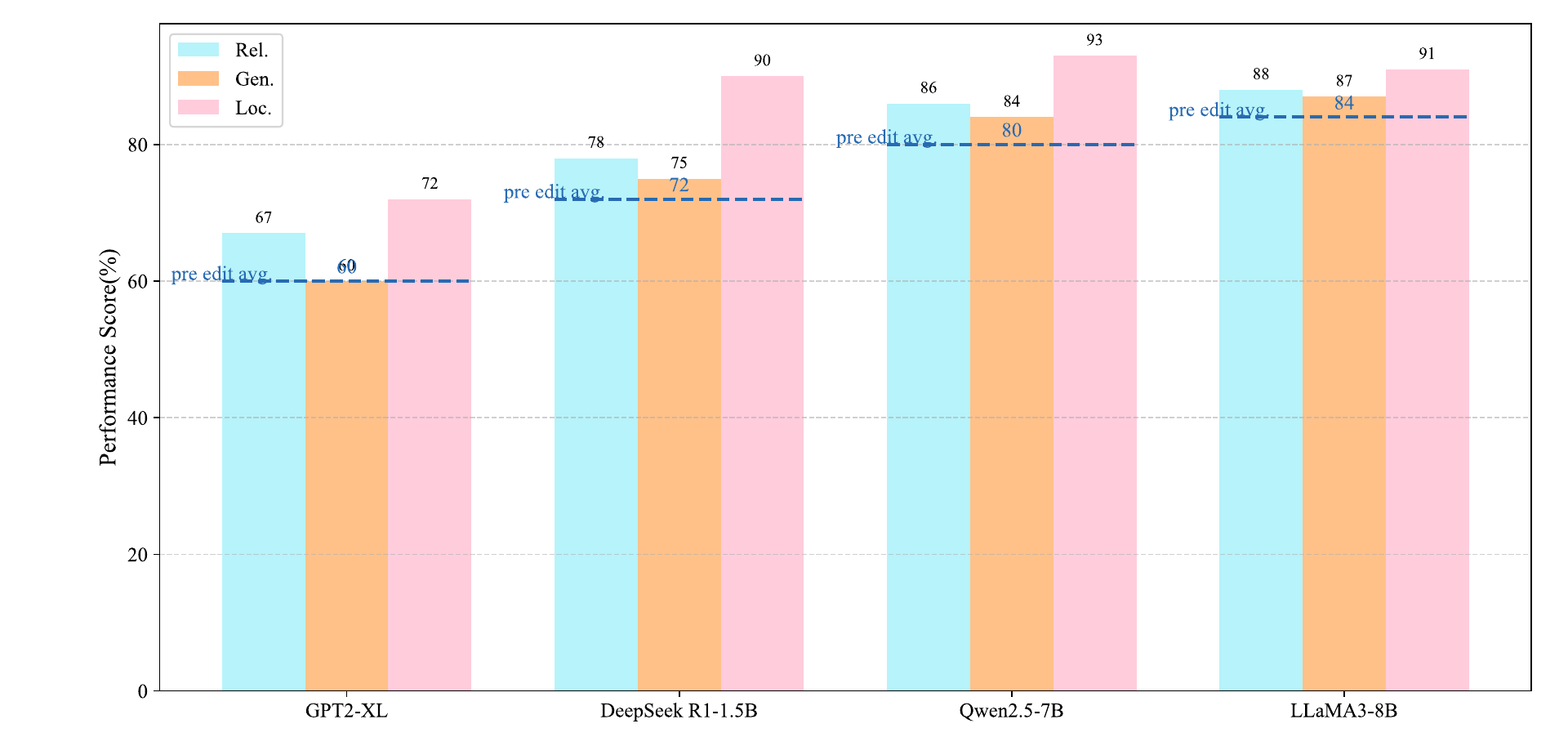}
\caption{\footnotesize Average Editing Performance of WikiBigEdit Across Different Models}
\label{fig:wiki_performance}
\end{figure}

Table \ref{Table 3} effectively addressed \textbf{Q1}, \textbf{Q4} and \textbf{Q5}.  It has been rigorously evaluated across diverse models and scales (N = 1, 30, 120, 1000) of QA editing tasks, demonstrating state-of-the-art performance. Fine-tuning-based methods achieve good accuracy and generalization at small scales but suffer from catastrophic forgetting and knowledge conflicts in large-scale edits, leading to performance degradation. GRACE excels in accuracy but has limited generalization, while WISE maintains strong locality but sacrifices critical knowledge, reducing editing accuracy. ROME-style methods are stable but overfit and struggle with generalization.

To address \textbf{Q2}, Table \ref{Hallutask} shows REPAIR's effectiveness in reducing hallucinations on the SelfCheckGPT dataset for LLaMA-3-8B across different editing scales. REPAIR balances reduced hallucinations with preserved locality, making it highly effective for large-scale model editing.

\begin{table*}[h]\scriptsize
\centering
\small
\caption{Main editing results for Hallucination task (SelfCheckGPT).}
\begin{tabular}{lcccccccccccc}
\toprule
\multirow{2}{*}{\textbf{Method}} & \multicolumn{2}{c}{$N = 1$} & \multicolumn{2}{c}{$N = 30$} & \multicolumn{2}{c}{$N = 120$} & \multicolumn{2}{c}{$N = 500$} \\
\cmidrule{2-9}
&Rel. (\textit{PPL} $\downarrow$) & Loc. ($\uparrow$) &
Rel. ($\downarrow$) & Loc. ($\uparrow$) &
Rel. ($\downarrow$) & Loc. ($\uparrow$) &
Rel. ($\downarrow$) & Loc. ($\uparrow$) \\
\midrule
\multicolumn{9}{c}{\textbf{LLaMA-3-8B}}\\
\midrule
FT-L        & 4.27   & 0.96 & 3.15    & 0.71 & 34.52   & 0.43 & 51.31   & 0.26 \\
FT-EWC      & 2.18   & 0.24 & 3.51    & 0.09 & 2.90    & 0.21 & 3.48    & 0.24 \\
MEND        & 5.34   & 0.87 & 1.24    & 0.86 & 9.17   & 0.89 & 564.9 & 0.00 \\
ROME        & 1.88   & 0.99 & 2.47   & 0.94 & 84.56   & 0.03 & 73.4  & 0.02 \\
MEMIT-M & 1.62   & \textbf{1.00} & 1.78 & 0.99 & 8.03  & 0.99 & 7.43  & 0.94 \\
DEFER       & \textbf{1.29} & 0.23 & 4.12 & 0.28 & 8.91  & 0.19 & 15.16  & 0.12 \\
GRACE       & 2.21   & \textbf{1.00} & 8.67 & \textbf{1.00} & 7.24 & \textbf{1.00} & 6.18 & \textbf{1.00} \\
WISE        &1.91 & \textbf{1.00} &1.59& \textbf{1.00} &1.14 & 0.99 &2.08 & 0.99 \\
\midrule
\rowcolor{lightpurple}
\textbf{REPAIR}& 1.43& \textbf{1.00} &1.37& \textbf{1.00}&\textbf{1.12} & \textbf{1.00} &\textbf{1.91} & \textbf{1.00} \\
\bottomrule
\label{Hallutask}
\end{tabular}
\end{table*}

\noindent\textbf{Reasoning ability retention (MMLU).} To verify that sequential editing does not compromise general reasoning capability, we evaluate the model before and after $N=100$ edits on MMLU (57 subjects). MMLU accuracy changes only slightly (0.6591 $\rightarrow$ 0.6332), indicating limited degradation of general reasoning ability.

To address \textbf{Q3} and \textbf{Q4}, Table~\ref{Table 3} and Figure~\ref{fig:wiki_performance} show that REPAIR’s closed‑loop error feedback, with distribution‑aware clustering and redistribution, yields superior performance across edit scales and stability for large‑scale edits. Smaller models concentrate knowledge in narrower parameter subsets, enabling reliable local corrections but weakening long‑term stability and generalization (i.e., maintaining accuracy while preserving unrelated knowledge). Accordingly, DeepSeek‑R1‑1.5B attains higher correction rates at small edit\_Num, yet degrades quickly as N grows. For locality, LLaMA‑3‑8B and Qwen2.5‑7B are marginally stronger due to parameter redundancy; DeepSeek‑R1‑1.5B remains competitive only at low N, then collapses under extreme multi‑point editing. In contrast, larger models distribute knowledge more broadly, and though harder to modify—successful edits generalize better across contexts. At a medium scale (N=120), MEMIT‑M and WISE show Rel., likely because REPAIR’s pruning / reassembly introduces transient instability before sufficient error signals accumulate; however, at N=1000 their performance drops sharply, while REPAIR’s dynamic adjustment preserves robustness and achieves the best metric. The error distribution is reported in Table~\ref{Tab:with-err}.


\begin{figure*}[!b]
\centering
\subfloat[]{
\includegraphics[width=0.48\textwidth]{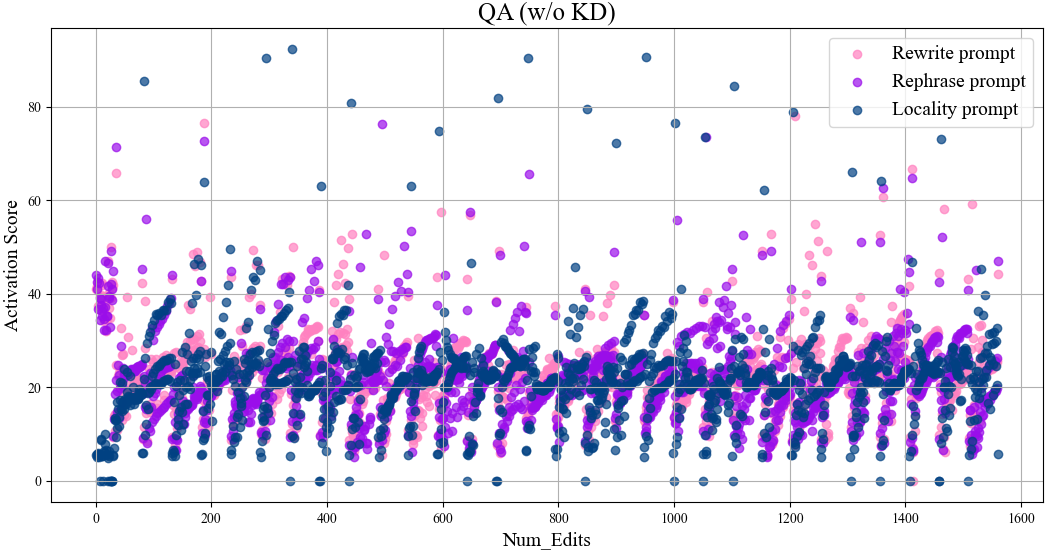}
\label{fig:sub1}
}
\hfill
\subfloat[]{
\includegraphics[width=0.48\textwidth]{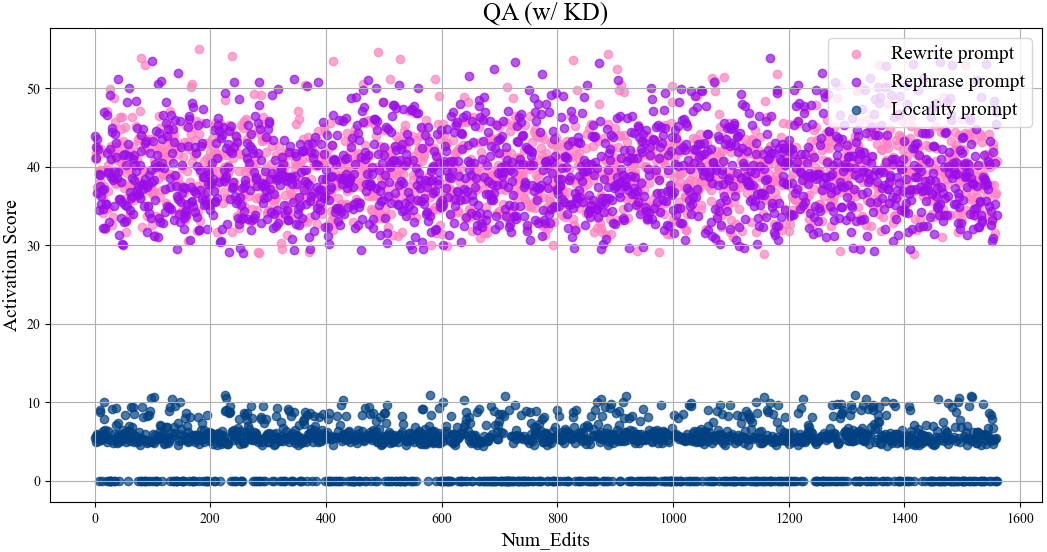}
\label{fig:sub2}
}
\hfill
\subfloat[]{
\includegraphics[width=0.48\textwidth]{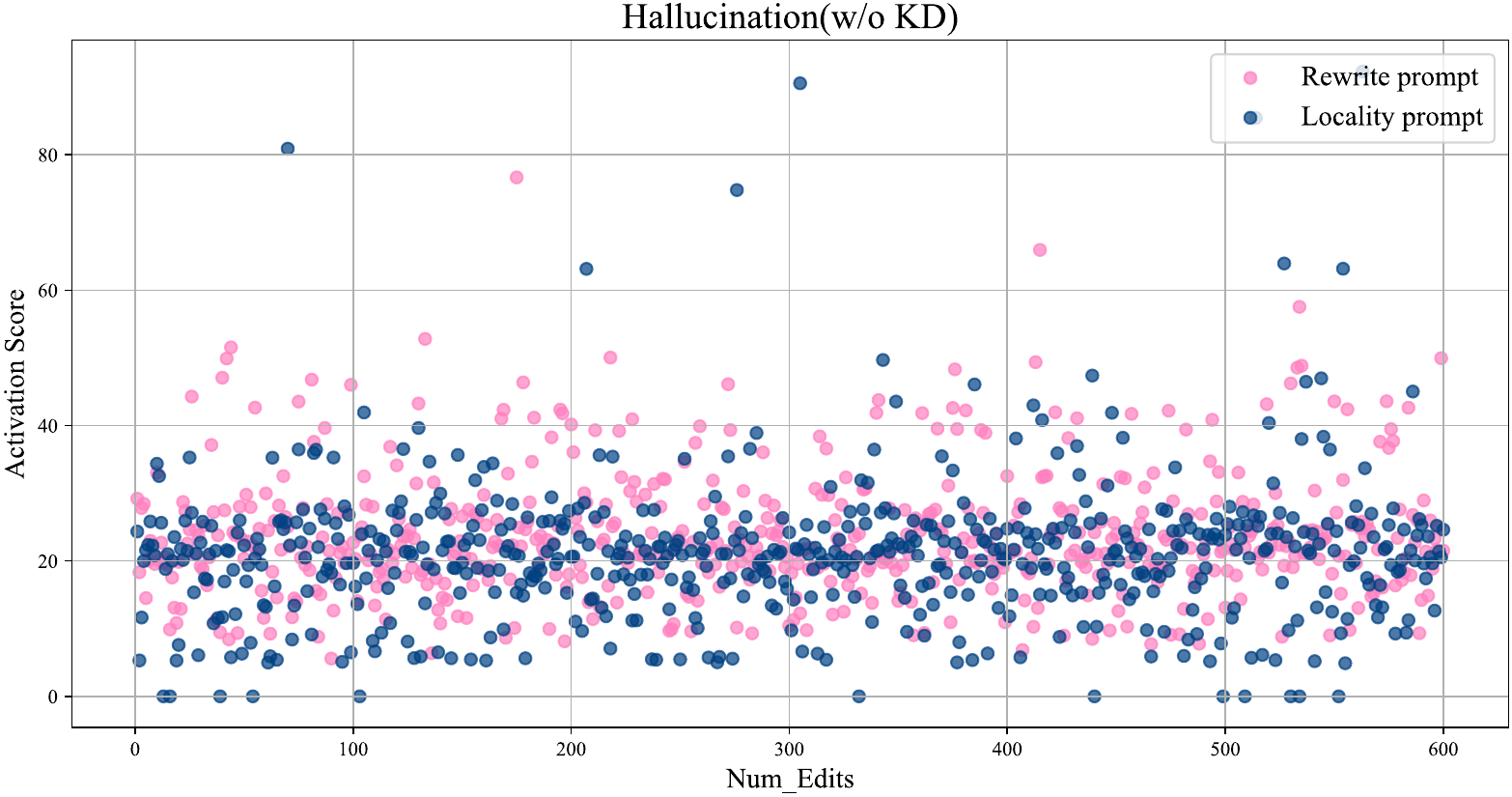}
\label{fig:sub3}
}
\hfill
\subfloat[]{
\includegraphics[width=0.48\textwidth]{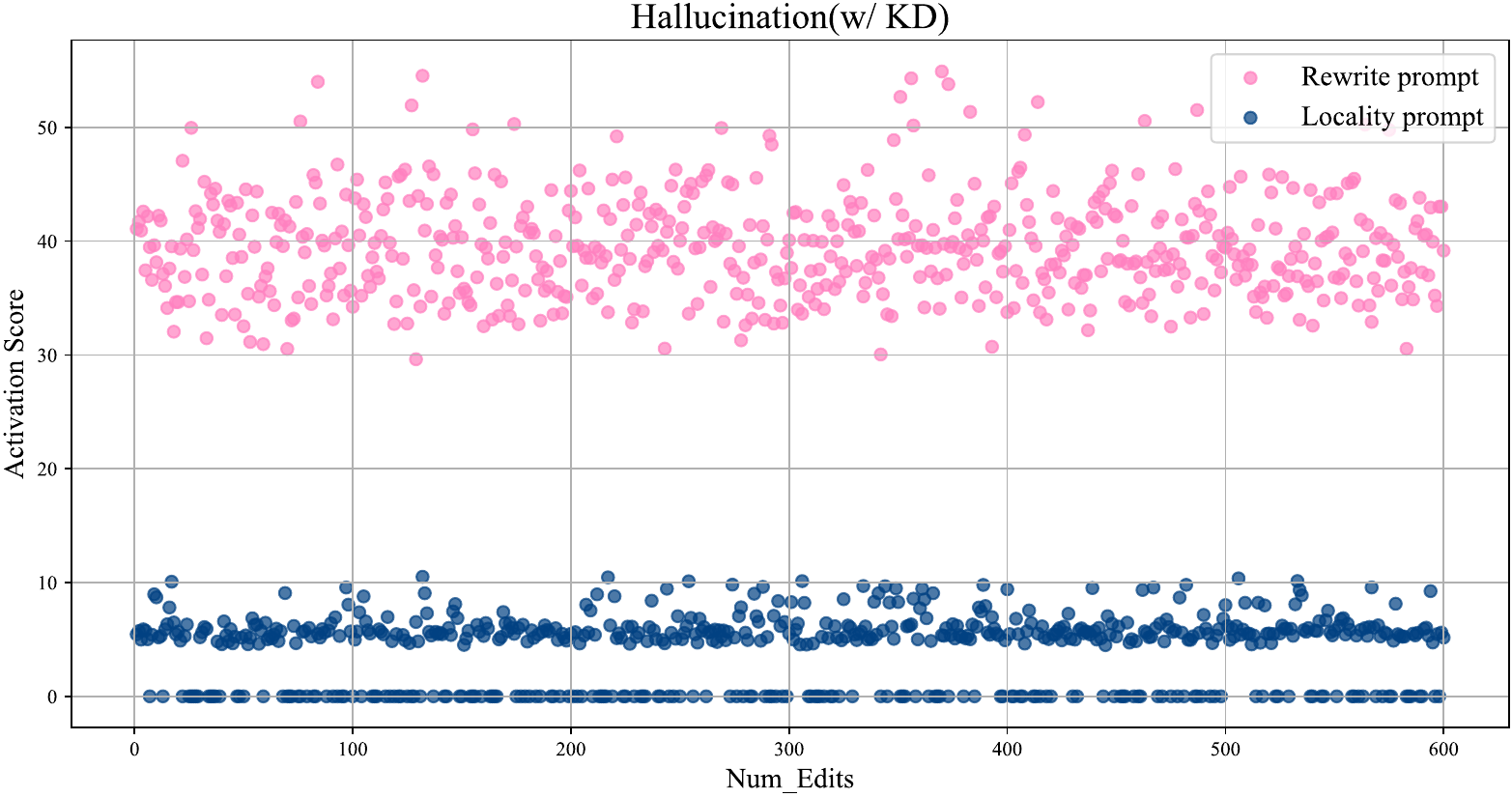}
\label{fig:sub4}
}
\caption{\textbf{Activation Score Visualization}. Results on LLaMA‑3 for the WikiBigEdit dataset (N=1550) for the QA task and the SelfCheckGPT dataset for hallucination (N=600).}
\label{fig:both}
\end{figure*}

Figure \ref{eq:activation-score} further addresses \textbf{Q1} regarding the effectiveness of distillation. For external memory-based editors, the ability to select the correct network for inference directly determines editing performance. The activation score, which serves as a critical routing criterion in memory networks, must exhibit statistically significant differences between new knowledge and irrelevant knowledge to ensure both reliability and locality of edits. As shown in Figure \ref{fig:both} (a) and (c), prior methods relying solely on triple-boundary loss fail to adequately separate the activation scores of $\textit{Data}{edit}$, $\textit{Data}{rephrase}$, and $\textit{Data}_{loc}$, particularly in large-scale continual editing scenarios, leading to a breakdown of the routing mechanism. This deficiency fundamentally limits their editing performance. In contrast, by introducing inner-batch knowledge distillation, sample filtering, and samples reintegration, KD, as shown in Figure \ref{fig:both} (b) and (d), achieves a clear separation among the three types of samples, thereby ensuring the proper functioning of the routing mechanism.

\subsection{Ablation Studies}
\begin{figure*}[!t]
\centering
\begin{subfigure}[b]{0.24\textwidth}
\centering
\includegraphics[width=\textwidth]{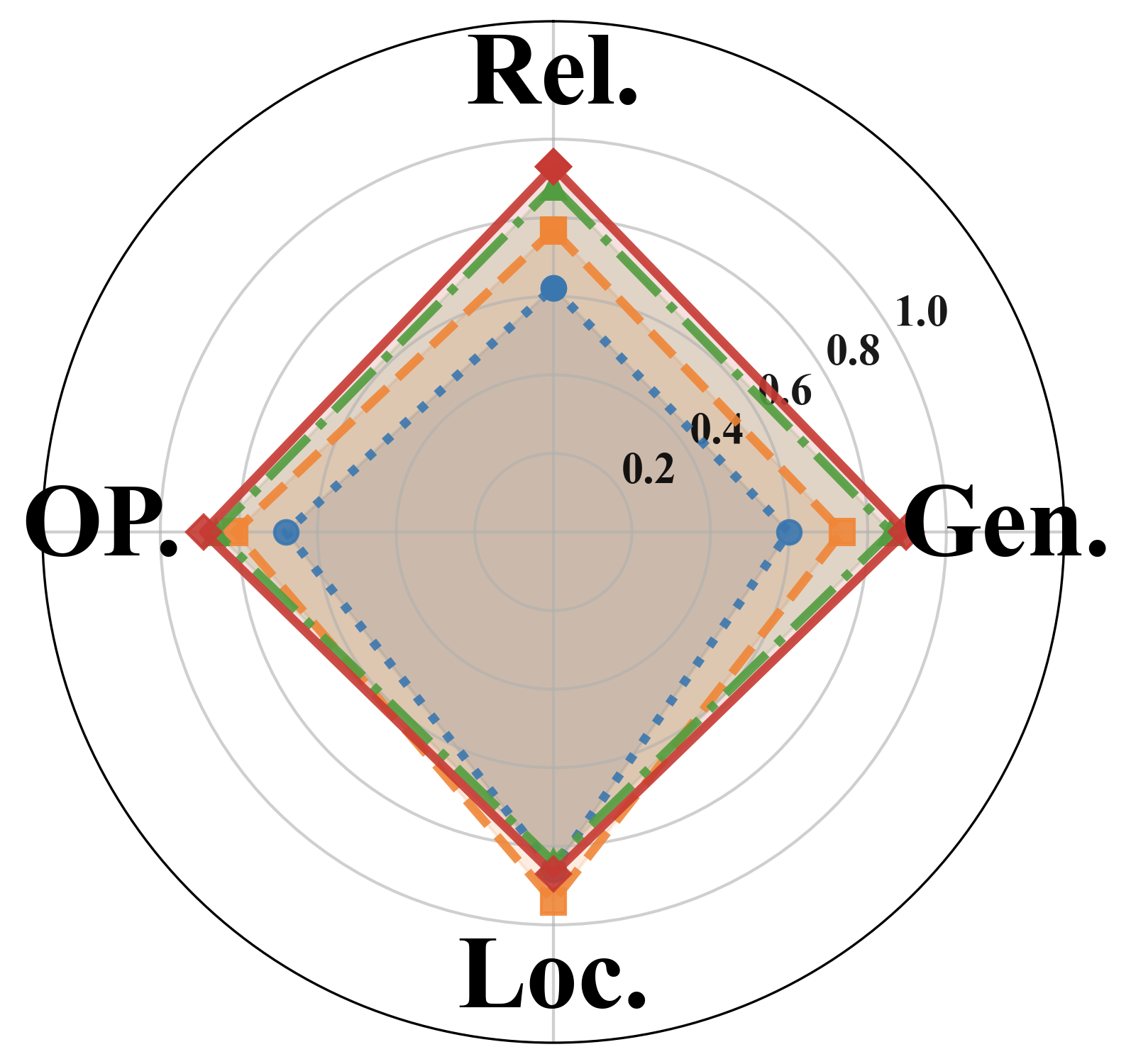}
\caption{N=30}
\end{subfigure}
\hfill
\begin{subfigure}[b]{0.24\textwidth}
\centering
\includegraphics[width=\textwidth]{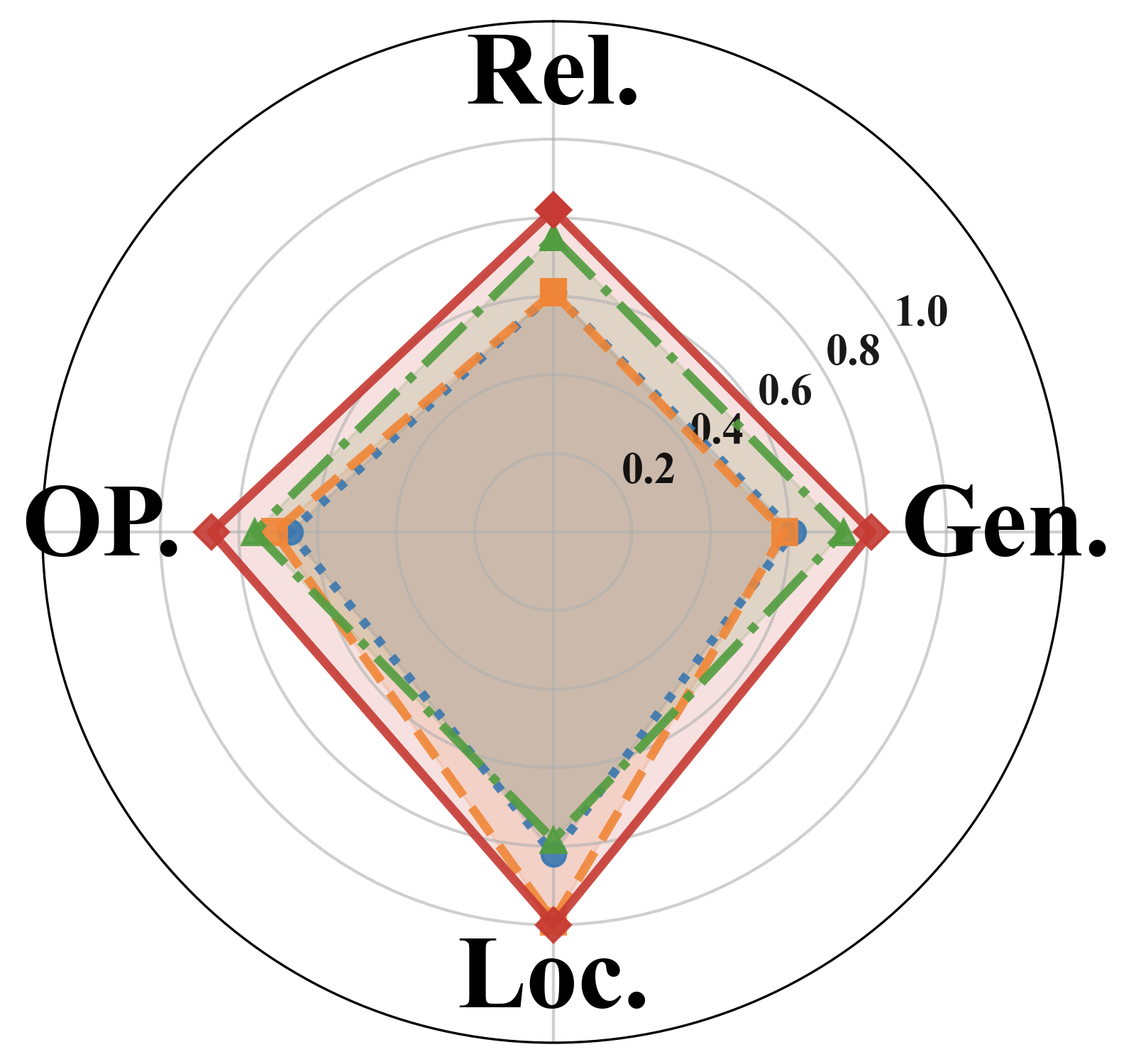}
\caption{N=60}
\end{subfigure}
\hfill
\begin{subfigure}[b]{0.24\textwidth}
\centering
\includegraphics[width=\textwidth]{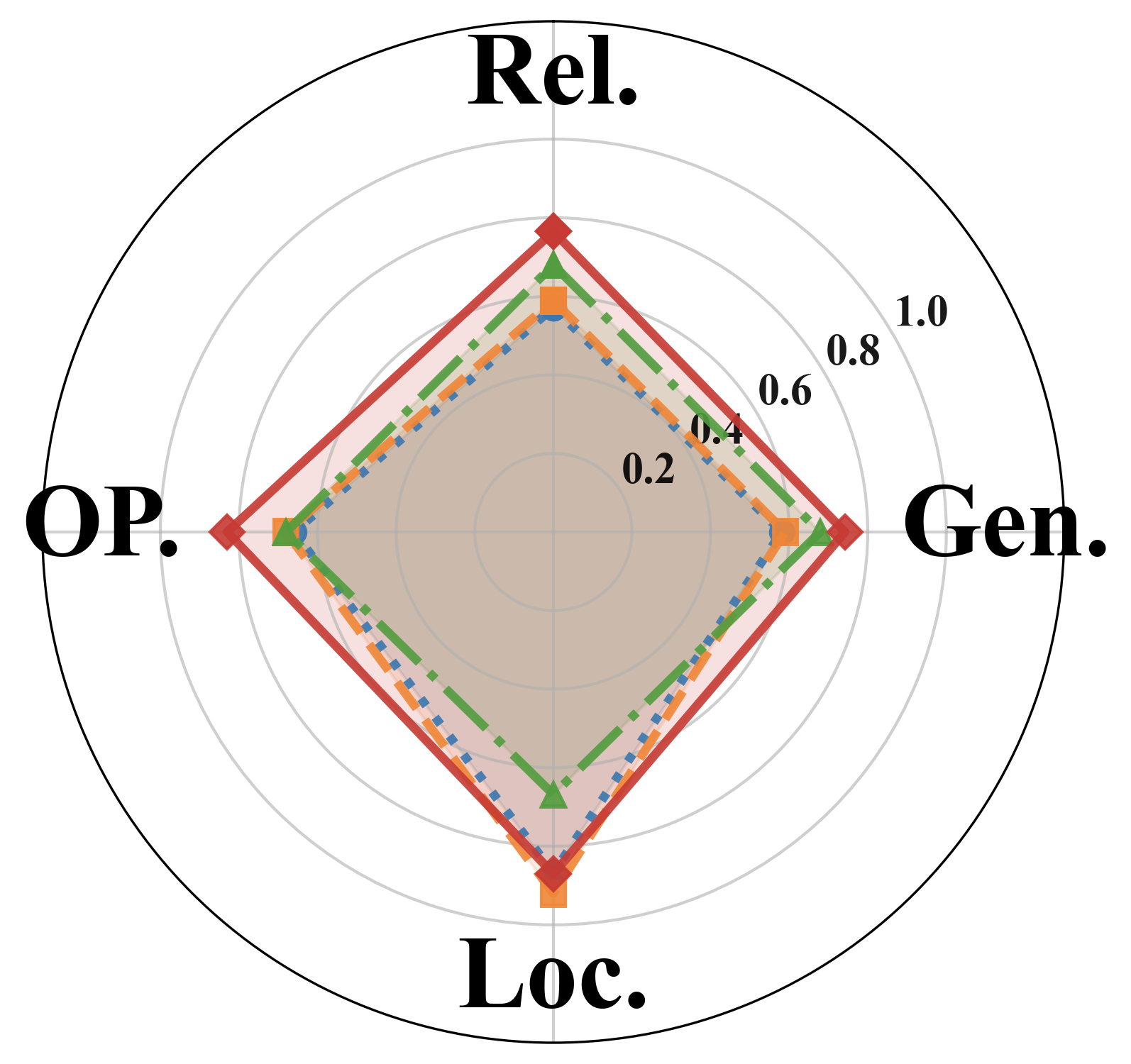}
\caption{N=120}
\end{subfigure}
\hfill
\begin{subfigure}[b]{0.24\textwidth}
\centering
\includegraphics[width=\textwidth]{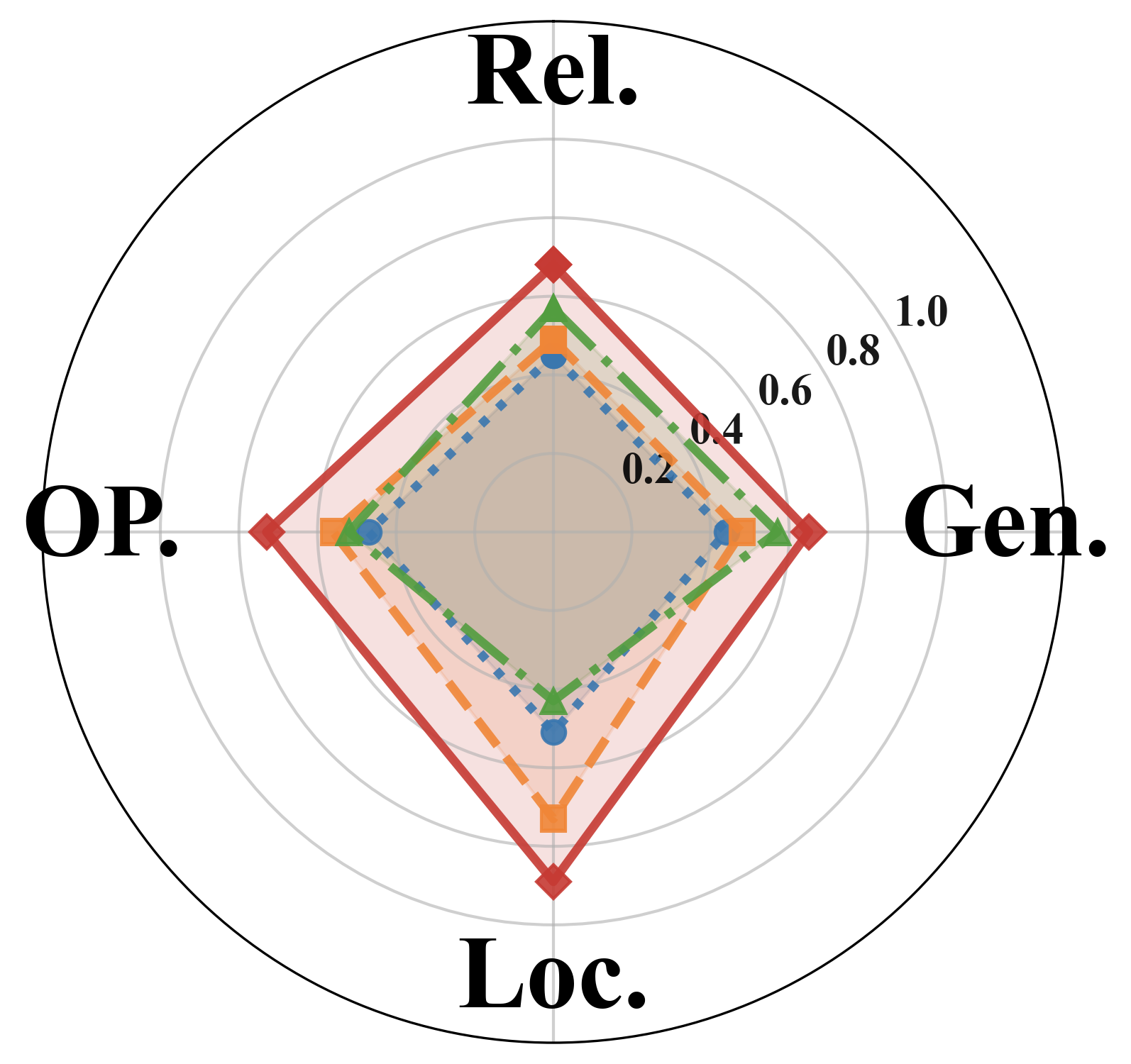}
\caption{N=1000}
\end{subfigure}

\begin{subfigure}[b]{\textwidth}
\centering
\includegraphics[width=0.8\textwidth]{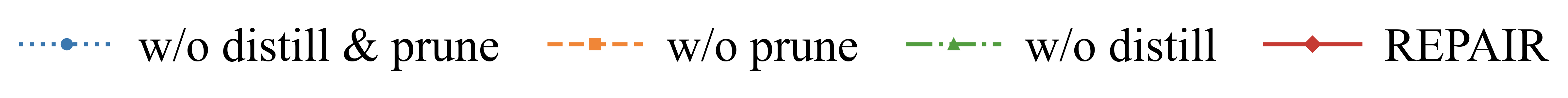}
\end{subfigure}

\caption{\textbf{Performance comparison of different components.} Each radar chart shows performance on four metrics: Rel., gen., loc., and OP. on Qwen2.5 with ZsRE.}
\label{fig:radar_comparison}
\end{figure*}

The evaluation of REPAIR's overhead and throughput is shown in \ref{fig:timeconsuming}. To answer \textbf{Q6}, we conducted evaluations across four dimensions to assess the effectiveness of each component of REPAIR and analyze hyperparameter sensitivity under different editing scales. Notably, REPAIR demonstrates robustness in large-scale editing scenarios that prior methods fail to achieve. As the number of edits increases, REPAIR exhibits increasingly pronounced advantages in overall performance: effective routing ensures strong locality, while the error-feedback mechanism maintains continual reliability. As shown in Figure \ref{fig:radar_comparison} (a)–(d), the relative contributions of REPAIR's components vary across sample regimes but complement each other seamlessly. In small-scale edits, pruning with error feedback improves reliability, while in large-scale scenarios, distribution-aware recognition and knowledge distillation become more critical.

\noindent\textbf{Homogeneous vs. heterogeneous batching.} To validate that distribution-aware batching is necessary (rather than merely increasing optimization steps), we disable similarity-based clustering and force the editor to train on heterogeneous batches. As shown in Table~\ref{tab:hetero_batch}, heterogeneous batching improves rewrite accuracy but significantly harms locality, confirming that our clustering mitigates interference among unrelated edits. Besides, we validate that the feature representations used for similarity-based batching capture meaningful within-task similarity (Appendix~\ref{app:feat_sim}).
\begin{table}[t]
\centering
\caption{\textbf{Effect of heterogeneous batches} (Qwen2.5, QA).}
\label{tab:hetero_batch}
\setlength{\tabcolsep}{8pt}
\begin{tabular}{lcc}
\toprule
\textbf{Batch Type} & \textbf{Rewrite Acc.} & \textbf{Locality} \\
\midrule
Homogeneous (ours) & 23.1\% & 42.5\% \\
Heterogeneous & 43.5\% & 29.4\% \\
\bottomrule
\end{tabular}
\end{table}

Regarding hyperparameter analysis in Figure \ref{fig:heatmap}, we observe distinct patterns:
low thresholds fail to filter low-quality samples, limiting corrective opportunities; The total number of edits is limited, and the filtered erroneous samples cannot receive sufficient corrective training, limiting overall performance. Many erroneous samples in the early stage undergo continuous learning, causing the model to quickly fall into local optima and catastrophic degradation of generalization. Later learning yields little improvement. In the upper-right quadrant, the absence of error feedback leaves many suboptimal samples, and the model editing efficiency is relatively high, approximating an open-loop editing process. In the lower-right quadrant, the model training efficiency is the lowest, but excessive editing can introduce overfitting risks, wasting computational resources on edits with low marginal utility. We also evaluate threshold sensitivity (see \ref{app:thresh_sensitivity}).

\begin{figure}[htbp]
\centering
\subfloat[]{
\includegraphics[width=0.48\textwidth]{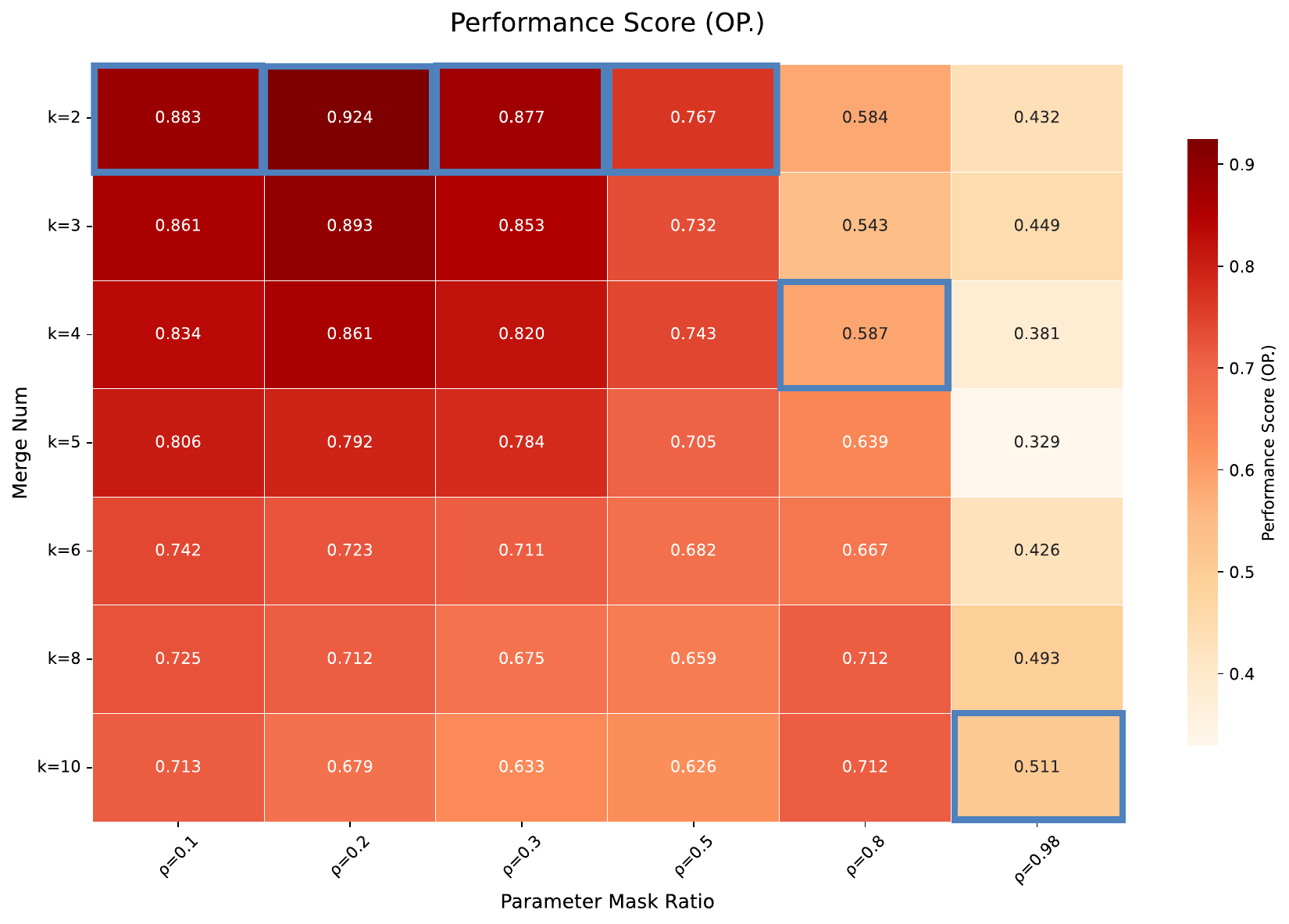}
\label{fig:heatmap_sub1}
}
\hfill
\subfloat[]{
\includegraphics[width=0.48\textwidth]{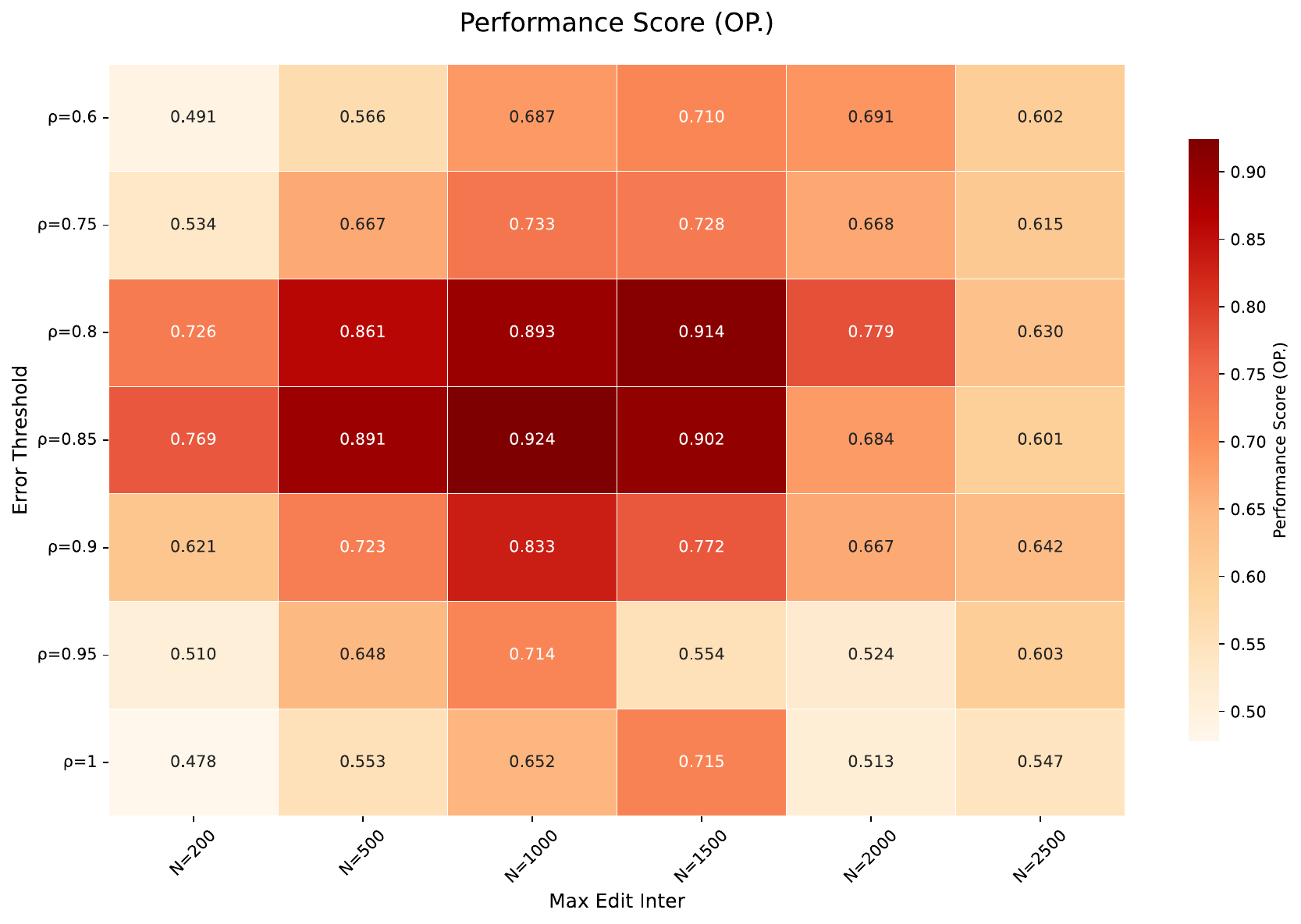}
\label{fig:heatmap_sub2}
}
\caption{\textbf{Performance heatmap for the N=120 QA task on the LLaMA3 model.} Figure (a) shows the sensitivity analysis of two hyperparameters: the number of subspaces and the amount of updated parameters;
Figure (b) analyzes the impact of error threshold and maximum iteration count on performance, with optimal performance observed at intermediate values.}
\label{fig:heatmap}
\end{figure}

\section{Conclusion}

In this work, we proposed \textbf{REPAIR}, a robust framework for lifelong model editing integrating error closed-loop feedback, inner-batch knowledge distillation, and loss-aware subspaces merging. Extensive experiments demonstrate that REPAIR maintains high performance under small-scale edits and exhibits remarkable robustness in large-scale editing scenarios, consistently outperforming existing baselines. These results highlight the potential of combining memory-aware strategies with optimization-driven editing for reliable and precise model updates. The intra-group distillation explicitly encourages feature alignment among similar samples, guiding the elimination and recombination of inconsistent samples. The loss-aware merging assigns higher weights to subspaces achieving lower training loss, effectively preserving reliable knowledge and reducing information dilution. Extensive experiments show that REPAIR consistently improves reliability and generalization, and demonstrates clear advantages in large-scale editing scenarios, highlighting the effectiveness of coordinated sample-level alignment and global reliability-aware merging.

\section*{Impact Statement}
This work studies lifelong model editing for LLMs, aiming to enable efficient, targeted updates (e.g., correcting outdated or incorrect facts) while reducing unintended side effects on unrelated behaviors. Positive impacts include lower computational cost compared to retraining, faster correction of factual errors, and improved maintainability of deployed systems in dynamic knowledge environments.
However, model editing methods may also introduce risks. First, they can be misused to inject misinformation, bias, or malicious instructions into a deployed model, especially if editing interfaces are exposed without appropriate access control and auditing. Second, imperfect locality could unintentionally alter model behavior on non-target content, potentially affecting downstream users unevenly across domains or demographic groups.
REPAIR is designed to mitigate some of these concerns through locality-oriented routing, closed-loop monitoring, and error-driven pruning, which help detect and correct harmful spillovers during sequential updates. We emphasize that safe deployment still requires governance beyond the algorithm itself, such as authenticated edit workflows, logging and review of edits, and evaluation suites that test both the intended changes and potential side effects.
\nocite{langley00}

\bibliographystyle{icml2026}
\bibliography{example_paper}

\newpage
\appendix
\onecolumn
\section{Statement}
\subsection{Ethics Statement}
This work studies safe, auditable editing of large language models using only publicly available datasets (ZsRE, WikiBigEdit, and a hallucination set) and off‑the‑shelf pretrained models; no human subjects or personally identifiable data were collected. We follow all dataset/model licenses and the double‑blind review policy. Potential risks include misuse of editing to inject misinformation or to weaken safety constraints, and unintended spillover of edits to unrelated behaviors. To mitigate these risks, our framework emphasizes locality and closed‑loop error checks before and after integration, and we report reliability–generalization–and locality metrics to surface side effects. Upon release, we will include guardrails such as edit logs, validation suites, reversible edits, and instructions for responsible use. These design choices align with REPAIR’s stated goal of precise updates with locality safeguards.
\subsection{Reproducibility Statement}
We will release our code, configs, and seeds to reproduce all results end‑to‑end after acceptance.  Scripts fetch data/models, fix environments, and regenerate all tables/figures with the exact metrics (Rel./Gen./Loc./OP., PPL); hardware and hyperparameters are documented.
\subsection{AI Usage Statement}
We used large language model–based tools during writing and implementation for text polishing, grammar and usage checks, and programming assistance (e.g., example code, refactoring, comments, and script templates). All AI-generated suggestions were reviewed, revised, and validated by the authors. The experimental design, data processing, result analysis, and conclusions were conducted independently by the authors; AI tools do not constitute authorship or academic credit. No sensitive or restricted data were provided to the tools, and they were not used to automatically generate experimental results or to replace essential human judgment.
\section{Related Work}
\label{gen_inst}
\subsection{Continual Learning}
Continual Learning (CL)—also known as Incremental Learning or Lifelong Learning—aims to enable models to learn sequentially from a stream of tasks without forgetting previously acquired knowledge. The core challenge in CL is catastrophic forgetting, where adapting to new tasks leads to a significant degradation in performance on earlier tasks \cite{Kirkpatrick2017EWC,McCloskey1989Catastrophic}. To address this, numerous methods have been proposed, which can be broadly categorized into five groups: regularization-based, replay-based, optimization-based, representation-based, and architecture-based approaches.

Regularization-based methods mitigate forgetting by adding constraints to the loss function to preserve important parameters or behaviors from previous tasks. For example, Elastic Weight Consolidation (EWC) leverages Fisher information to regularize parameter updates \cite{Kirkpatrick2017EWC}, while Learning without Forgetting (LwF) uses knowledge distillation to maintain output consistency \cite{Li2018LwF}.

Replay-based methods retain or generate samples from previous tasks to approximate old data distributions. Experience replay stores a subset of prior samples in a memory buffer \cite{LopezPaz2017GEM}, whereas generative replay synthesizes pseudo-samples using deep generative models such as GANs or VAEs \cite{Shin2017DGR}.

Optimization-based methods manipulate the optimization process itself to avoid interference between tasks. Gradient Episodic Memory (GEM) projects gradients so as not to increase loss on previous tasks \cite{LopezPaz2017GEM}, while Orthogonal Gradient Descent (OGD) promotes updates that are orthogonal to gradient directions associated with past tasks \cite{Farajtabar2020OGD}.

Representation-based methods focus on learning robust and transferable features that are less prone to forgetting. Self-supervised learning \cite{Fini2022SSLCL} and large-scale pre-training \cite{Mehta2023PretrainingLL} have been shown to bolster CL performance by providing more stable representations.

Architecture-based methods \cite{Ran2024BraininspiredCP,Rusu2016Progressive,Mallya2018PackNet}dynamically expand or partition the network to allocate task-specific parameters. Progressive Networks add new columns for each incoming task with lateral connections to prior columns \cite{Rusu2016Progressive}, while PackNet iteratively prunes and reuses weights to free capacity for new tasks \cite{Mallya2018PackNet}.

Recent trends extend CL to more realistic and challenging settings, including class-incremental learning (CIL), task-free CL (TFCL), online CL (OCL), and applications across object detection, semantic segmentation, reinforcement learning, and natural language processing \cite{Wang2023CLSurvey}.

\subsection{Model editing}
Model editing targets post-hoc modification of a trained model’s behavior to insert, correct, or remove specific knowledge, ideally without harming unrelated capabilities. A common taxonomy distinguishes (i) \emph{direct / training-free} parameter edits, (ii) \emph{learning-based} editors that predict weight updates, and (iii) \emph{semi-parametric} systems that externalize edits via retrieval or memory; recent surveys consolidate definitions, benchmarks, and open challenges \cite{Wang2024KESurvey}.

ROME locates causal mediators of factual associations in mid-layer feed-forward (MLP) modules of Transformers and applies a rank-one update to edit a single fact \cite{Meng2022ROME}. MEMIT extends this idea to \emph{mass editing}, deriving multi-layer closed-form updates that scale to thousands of edits in large models while maintaining stronger locality than prior methods \cite{Meng2023MEMIT}. Although effective, subsequent analyses highlight stability issues under \emph{sequential} edits and propose remedies \cite{Gupta2024RebuildROME}.

Early work framed editing as learning a small hypernetwork to predict weight deltas from an edit specification: KnowledgeEditor (KE) learns constrained updates to change a model’s factual prediction while preserving behavior on paraphrases \cite{DeCao2021KE}. MEND trains lightweight editor networks to transform fine-tuning gradients, enabling fast, local edits at scale across architectures \cite{Mitchell2022MEND}. Instruction-driven variants further condition edits on natural-language instructions to improve usability and control \cite{Zhang2024InstructEdit}.

Semi-parametric approaches such as SERAC store edits in an external key–value memory and learn to route between the base model and retrieved counterfactuals, achieving strong reliability and specificity without permanently altering base parameters \cite{Mitchell2022SERAC}. This design is attractive when edits must be audited, reverted, or scoped to contexts.

Editing methods are typically assessed along \emph{reliability} (does the change take effect), \emph{locality/specificity} (does unrelated behavior remain intact), and \emph{generalization} (do edits transfer to paraphrases and contexts). Standard benchmarks include CounterFact and zsRE \cite{CounterFact2022,Levy2017ZsRE}. Recent studies examine \emph{ripple effects} beyond targeted facts, revealing broader side impacts on reasoning and distributed knowledge, and call for more rigorous, stress-testing evaluations \cite{Cohen2024Ripple}. Overall, direct, learning-based, and semi-parametric approaches offer complementary trade-offs in edit scalability, controllability, and safety; combining precise localization with guardrails (e.g., retrieval gating, edit scopes, or validation filters) remains an active direction \cite{Wang2024KESurvey}.

\noindent\textbf{Relation to MoE/LoRA-style lifelong editors.} Recent lifelong editing approaches often instantiate a mixture of experts (MoE) or Mixture-of-LoRA design, where new edits are stored as isolated experts/adaptors and selected by a router. In contrast, REPAIR emphasizes \emph{closed-loop} correction (monitoring, pruning, and reintegration) and \emph{loss-aware merging} that periodically fuses shards into a consolidated parameter subspace, aiming to mitigate unbounded expert growth and to improve long-term coherence.

\noindent\textbf{Relation to retrieval-augmented continual prompting.} Our pipeline is also related to methods that use retrieval and continuous adaptation for knowledge editing (e.g., RECIPE), but REPAIR differs in that it focuses on parametric updates with explicit shard-level health checks, pruning, and reintegration, rather than relying on a prompt-centric external retrieval interface.

\noindent\textbf{Relation to MoE/LoRA-style lifelong editors.} Recent lifelong editing approaches often instantiate a mixture of experts (MoE) or Mixture-of-LoRA design, where new edits are stored as isolated experts/adaptors and selected by a router. In contrast, REPAIR emphasizes \emph{closed-loop} correction (monitoring, pruning, and reintegration) and \emph{loss-aware merging} that periodically fuses shards into a consolidated parameter subspace, aiming to mitigate unbounded expert growth and to improve long-term coherence.

\noindent\textbf{Relation to retrieval-augmented continual prompting.} Our pipeline is also related to methods that use retrieval and continuous adaptation for knowledge editing (e.g., RECIPE), but REPAIR differs in that it focuses on parametric updates with explicit shard-level health checks, pruning, and reintegration, rather than relying on a prompt-centric external retrieval interface.

\section{Experiment details}
\label{app:experiment_details}
The experiment details are given in Table \ref{Dataset statistics}, and hyperparameters are in Table \ref{Hyperparameter}.

\subsection{Feature similarity for batching}
\label{app:feat_sim}
We examine whether the feature representations used to construct homogeneous batches capture meaningful similarity. To validate the routing criterion, we quantitatively analyze the feature space across different editing tasks (ZsRE vs. Hallucination).

As shown in Table~\ref{tab:feat_sim}, intra-task self-similarity is substantially higher than cross-task similarity, indicating that the features capture task-specific semantics.

\begin{table}[t]
\centering
\caption{Mean cosine similarity of feature representations across editing tasks.}
\label{tab:feat_sim}
\small
\setlength{\tabcolsep}{10pt}
\begin{tabular}{lc}
\toprule
\textbf{Comparison} & \textbf{Mean Cosine Similarity} \\
\midrule
ZsRE (Self-Similarity) & 0.6824 \\
Hallucination (Self-Similarity) & 0.5878 \\
Cross-Task (ZsRE vs. Hallucination) & 0.4423 \\
\bottomrule
\end{tabular}
\end{table}

\subsection{Threshold sensitivity}
\label{app:thresh_sensitivity}
To further address the concern that REPAIR may be sensitive to the error-filtering threshold, we conduct a one-dimensional sweep over the filtering threshold $\gamma_{\text{filter}}$ while keeping other settings fixed. Table~\ref{tab:thresh_sensitivity} reports Rewrite Accuracy (edit success on the target prompt) and Locality Accuracy.

\begin{table}[t]
\centering
\caption{\textbf{Sensitivity to the error-filtering threshold} $\gamma_{\text{filter}}$ (\%).}
\label{tab:thresh_sensitivity}
\resizebox{\columnwidth}{!}{%
\begin{tabular}{c|ccccccccc}
\toprule
$\gamma_{\text{filter}}$ & 0.1 & 0.2 & 0.3 & 0.4 & 0.5 & 0.6 & 0.7 & 0.8 & 0.9 \\
\midrule
Rewrite Acc.  & 54.85 & 54.85 & 54.85 & 54.85 & 54.85 & 54.85 & 54.85 & 54.85 & 54.85 \\
Locality Acc. & 54.18 & 54.18 & 72.43 & 94.69 & 99.44 & 99.44 & 99.44 & 100.00 & 100.00 \\
\bottomrule
\end{tabular}%
}
\end{table}

As shown in the sweep, REPAIR exhibits a broad stable operating range ($\gamma_{\text{filter}}\in[0.5,0.9]$), in which Locality Accuracy consistently reaches near-perfect levels (\(>99\%\)) while Rewrite Accuracy remains strictly stable (0.5485), indicating no trade-off degradation. In contrast, extremely low thresholds ($\gamma_{\text{filter}}<0.3$) reduce locality because insufficient filtering allows interference to propagate. Overall, the method does not require precise fine-tuning of $\gamma_{\text{filter}}$: any value chosen in the upper spectrum yields optimal behavior, supporting that REPAIR is robust rather than hyperparameter-sensitive.

\begin{table}[t]
\caption{Dataset statistics}
\label{Dataset statistics}
\begin{center}
\small
\begin{tabular}{ccccc}
\toprule
Task& Editing Data&N&Pre-edit(LLaMA/Qwen)&Locality Data\\
\midrule
\multirow{2}{*}{QA}  &ZsRE&1000      &0.25/0.21 ACC &NQ \cite{2019NQdata} \\
&WikiBigEdit &500K&0.36/0.32 ACC &NQ\\
Hallu. &SelfCheckGPT&600&28.7/29.1 PPL&RedPajama \cite{DBLP:conf/nips/WeberFAOAALNYAA24}\\
\bottomrule
\end{tabular}
\end{center}
\end{table}

\begin{table}[t]
\caption{Hyperparameter settings}
\label{Hyperparameter}
\begin{center}
\small
\begin{tabular}{cccccc}
\toprule
\multicolumn{6}{c}{ZsRE on LLaMA-3}\\
\cmidrule{1-6}
\multicolumn{1}{c}{HYPER}  &\multicolumn{1}{c}{VALUE}  &\multicolumn{1}{c}{HYPER}  &\multicolumn{1}{c}{VALUE}  &\multicolumn{1}{c}{HYPER}  &\multicolumn{1}{c}{VALUE}\\
\cmidrule{1-6}
Mask ratio   &0.20 &Edit\_lr      &0.90 &Err\_Thresh  &0.85\\
$\lambda_a$   &1.00&$\lambda_{KD}$ &1.00 &Max\_iter &10000\\
Temperature  &2.00  &Act ratio  &0.20 &Layer\_ID &29.00\\
$\gamma_1$ &2.00  & $\gamma_2$& 20.00&$\gamma$&10.00 \\
$n_{\mathrm{iter}}$ &30.00 &$\lambda$&0.20 &Act\_ratio &0.30\\
\bottomrule
\multicolumn{6}{c}{ZsRE on Qwen2.5}\\
\cmidrule{1-6}
Mask ratio   &0.20 &Edit\_{lr}     &0.90 &Err\_Thresh  &0.85\\
$\lambda_a$   &2.00 &$\lambda_{KD}$ &1.00 &Max\_iter &10000\\
Temperature  &2.00  &Act ratio  &0.88 &Layer\_ID &23.00\\
$\gamma_1$ &5.00  & $\gamma_2$& 20.00&$\gamma$&10.0 \\
$n_{\mathrm{iter}}$ &50.00 &$\lambda$&0.30 &Act\_ratio &0.30\\
\bottomrule
\multicolumn{6}{c}{Selfcheck GPT on LLaMA-3-8B}\\
\cmidrule{1-6}
Mask ratio   &0.20 &Edit\_{lr}     &1.00 &Err\_Thresh  &0.85\\
$\lambda_a$   &5.00 &$\lambda_{KD}$ &1.00 &Max\_iter &5000\\
Temperature  &2.00  &Act ratio  &0.88 &Layer\_ID &27.00\\
$\gamma_1$ &5.00  & $\gamma_2$& 20.00&$\gamma$&10.00 \\
$n_{\mathrm{iter}}$ &50.00 &$\lambda$&0.20 &Act\_ratio &0.80\\
\bottomrule
\end{tabular}
\end{center}
\end{table}


\begin{table}[h]
\caption{Main results for QA on DeepSeek‑R1‑1.5B $N$: Num Edits.}
\label{Tab:DP}
\centering
\small
\setlength{\tabcolsep}{2pt}
\begin{tabular}{lccc|c|ccc|c|ccc|c|ccc|c|}
\toprule
\multirow{2}{*}{\textbf{Method}} & \multicolumn{4}{c}{$N = 1$} & \multicolumn{4}{c}{$N = 30$} & \multicolumn{4}{c}{$N = 120$} & \multicolumn{4}{c}{$N = 1000$} \\
\cmidrule{2-17}

& Rel. & Gen. & Loc. & OP. & Rel. & Gen. & Loc. & OP. & Rel. & Gen. & Loc. & OP. & Rel. & Gen. & Loc. & OP. \\
\midrule
& \multicolumn{15}{c}{DeepSeek‑R1‑1.5B (ZsRE)} \\
\cmidrule{1-17}
FT-L & 0.43 & 0.42 & 0.95 & 0.56 & 0.32 & 0.33 & 0.46 & 0.36 & 0.21 & 0.21 & 0.15 & 0.19 & 0.17 & 0.15 & 0.09 & 0.13 \\
FT-EWC & 0.97 & \textbf{0.94} & 0.15 & 0.52 & 0.82 & 0.81 & 0.02 & 0.24 & 0.63 &0.64& 0.02 & 0.20 & 0.57 &0.56 & 0.02 & 0.19 \\
MEND & 0.95 & \textbf{0.94} & 0.98 & 0.96 & 0.42 & 0.42 & 0.18 & 0.32 & 0.18 & 0.12 & 0.07 & 0.11 & 0.8 & 0.03 & 0.00 & 0.00 \\
ROME & 0.87 & 0.87 & 0.99 & 0.91 & 0.66 & 0.64 & 0.72 & 0.67 & 0.17 & 0.18 & 0.09 & 0.14 & 0.01 & 0.01 & 0.01 & 0.01 \\
MEMIT-M & 0.88 & 0.87 & 0.99 & 0.91 & 0.71 & 0.72 & 0.92 & 0.78 & 0.63 & 0.65 & 0.78 & 0.68 & 0.48 & 0.47 & 0.53 & 0.49 \\
DEFER & 0.62 & 0.60 & 0.82 & 0.67 & 0.58 & 0.57 & 0.57 & 0.57 & 0.34 & 0.31 & 0.23 & 0.29 & 0.07 & 0.06 & 0.02 & 0.04 \\
GRACE & \textbf{0.98} & 0.31 & 0.99 & 0.67 & \textbf{0.92} & 0.22 & \textbf{0.98} & 0.58 & \textbf{0.89} & 0.13 & \textbf{1.00} & 0.49 & \textbf{0.83} & 0.05 & \textbf{0.94} & 0.34 \\
WISE & 0.92 & 0.90 & \textbf{1.00} & 0.94 & 0.86 & 0.85 & 0.92 & 0.88 & 0.72 & 0.72 & 0.87&\textbf{0.77}& 0.49 & 0.47 & 0.47 & 0.48\\
\cmidrule{1-17}
\rowcolor{lightpurple}
\textbf{REPAIR} & 0.93 & 0.93 & \textbf{1.00} & \textbf{0.95} & 0.91 & \textbf{0.89} & 0.87 & \textbf{0.89$\uparrow$} & 0.74 & \textbf{0.74} & 0.82 & \textbf{0.77$\uparrow$} & 0.59 &\textbf{0.57} & 0.61 & \textbf{0.59$\uparrow$} \\
\bottomrule
\end{tabular}
\end{table}

\begin{table}[h]
\caption{Main results for QA (ZsRE) on multi-model editing with error distribution.}
\label{Tab:with-err}
\centering
\footnotesize
\setlength{\tabcolsep}{2pt}
\begin{tabular}{lcccccc}
\toprule
\multirow{2}{*}{\textbf{Method}} & \multicolumn{3}{c}{$N = 1$} & \multicolumn{3}{c}{$N = 30$} \\
\cmidrule{2-7}
& Rel. & Gen. & Loc. & Rel. & Gen. & Loc.  \\
\cmidrule{1-7}
LLaMA-3-8B & $0.94\pm0.008$ & $0.92\pm0.01$ & $1.00_{-0.02}^{+0.00}$ & $0.93\pm0.003$ &$0.90\pm0.003$ & $0.87\pm0.004$ \\
Qwen2.5-7B & $0.98\pm0.02$ & $0.95\pm0.03$ & $1.00_{-0.02}^{+0.00}$ & $0.93\pm0.04$ &$0.90\pm0.03$  & $0.93\pm0.01$ \\
DeepSeek-R1&$0.93\pm0.02$&$0.92\pm0.03$&$0.99\pm0.01$&$0.91\pm0.01$&$0.89\pm0.03$&$0.87\pm0.01$\\
GPT2-XL&$0.91\pm0.03$&$0.92\pm0.03$&$0.99\pm0.01$&$0.88\pm0.03$&$0.88\pm0.02$&$0.84\pm0.01$\\
\cmidrule{1-7}
\multirow{2}{*}{\textbf{Method}} & \multicolumn{3}{c}{$N = 120$} & \multicolumn{3}{c}{$N = 1000$} \\
\cmidrule{2-7}
& Rel. & Gen. & Loc. & Rel. & Gen. & Loc. \\
\midrule
\cmidrule{1-7}
LLaMA-3-8B  &$0.76\pm0.03$&$0.74\pm0.02$&$1.00_{-0.04}^{+0.00}$&$0.68\pm0.05$&$0.65\pm0.01$&$0.89\pm0.04$ \\
Qwen2.5-7B & $0.81\pm0.04$&$0.80\pm0.05$&$0.92\pm0.03$ &$0.72\pm0.05$&$0.70\pm0.04$&$0.67\pm0.03$\\
DeepSeek-R1&$0.74\pm0.03$&$0.74\pm0.04 $&$0.82\pm0.05$&$0.59\pm0.02$&$0.57\pm0.01$&$0.61\pm0.03$\\
GPT2-XL&$0.79\pm0.02$&$0.77\pm0.01 $&$0.80\pm0.03$&$0.61\pm0.03$&$0.62\pm0.01$&$0.68\pm0.02$\\
\bottomrule
\end{tabular}
\end{table}

\begin{figure}
\centering
\includegraphics[width=0.8\linewidth]{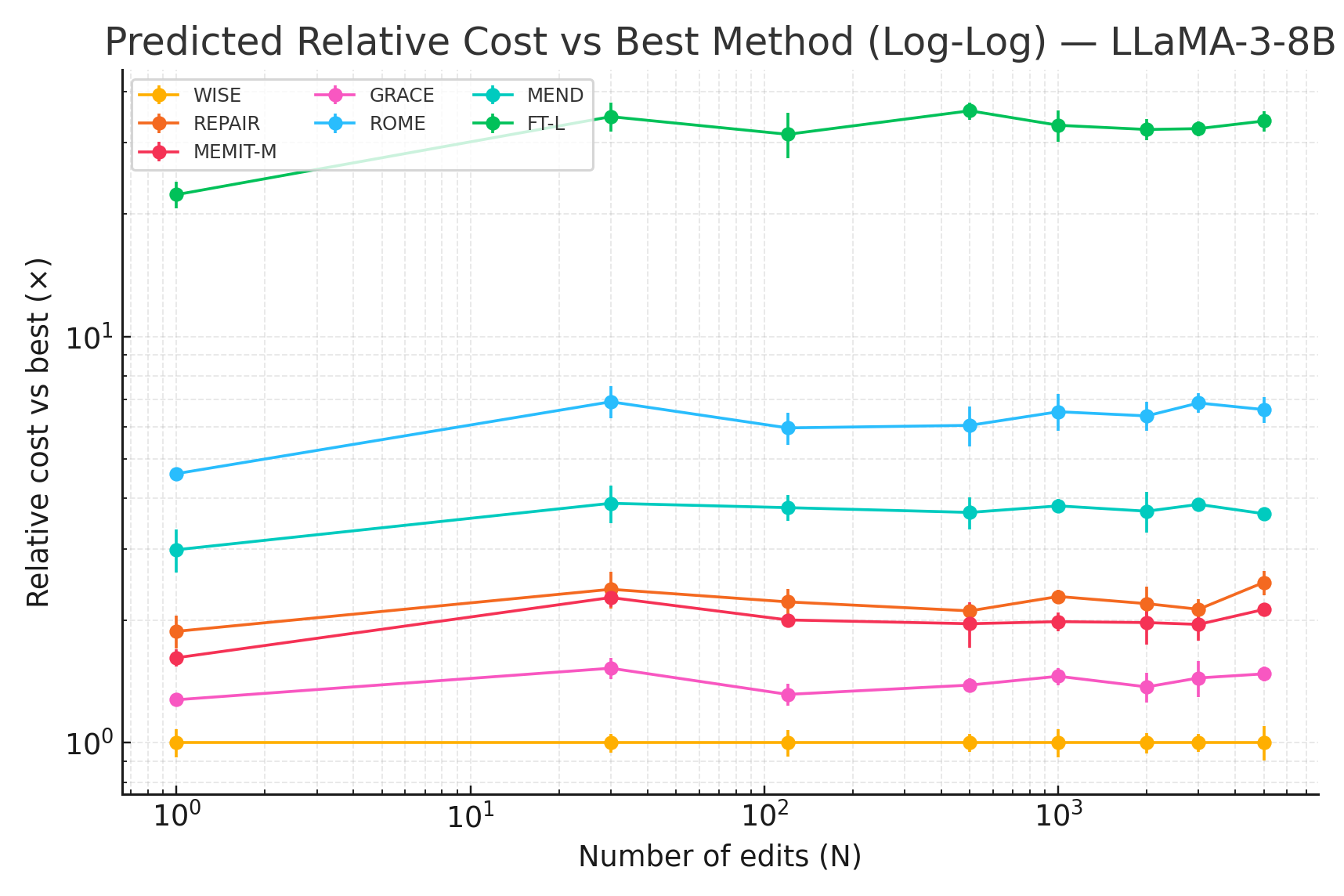}
\caption{\textbf{Cost-Performance Assessment.} The total runtime of each method scales approximately linearly with the editing scale \(N\), appearing as straight lines with slopes close to 1 in log-log coordinates. This indicates that the primary overhead is proportional to the number of edited entries.}
\label{fig:timeconsuming}
\end{figure}

Under identical hardware and batch configurations, WISE typically has lower per-edit overhead. REPAIR exhibits a similar scaling slope but may incur a higher constant-factor cost, mainly due to:
\begin{itemize}
\item Distribution-aware clustering and reorganization;
\item Additional forward/backward passes for in-batch distillation;
\item Closed-loop monitoring and the (occasionally triggered) pruning/retraining procedure;
\item The final \textbf{merging (TIES)} cost.
\end{itemize}

To quantify the practical overhead, we additionally profile wall-clock time and resource usage against WISE under a small-scale sequential editing setting (\(N=100\)). As shown in Table~\ref{tab:wise_repair_cost_n100}, REPAIR introduces a modest initialization overhead, while the overall editing time and memory footprint remain comparable, and inference latency is slightly improved.

\begin{table}[t]
\centering
\small
\setlength{\tabcolsep}{6pt}
\caption{\textbf{Time and resource comparison with WISE} in a sequential editing setup (\(N=100\)).}
\label{tab:wise_repair_cost_n100}
\begin{tabular}{lcc}
\toprule
\textbf{Metric} & \textbf{WISE} & \textbf{REPAIR} \\
\midrule
Initialization time (s) & 3.86 & 4.22 \\
Total editing time (s) & 412.96 & 411.94 \\
Avg. inference latency (ms) & 1135.32 & 1067.86 \\
Memory increase (MB) & 25118 & 25008 \\
\bottomrule
\end{tabular}
\end{table}

We emphasize that REPAIR's additional constant factors are primarily activated when closed-loop feedback detects drift (e.g., via retriggering and reintegration). Thus, the overhead is workload-dependent and can become more pronounced as the edit stream grows and more corrective cycles are triggered.

\textbf{Throughput.} Under our profiling setup, REPAIR achieves approximately $0.8$--$0.9$ edits/min at scale, while WISE reaches around $1.8$ edits/min. REPAIR trades throughput for robustness due to the added monitoring and reintegration steps.

Error bars (standard deviation across runs) indicate that REPAIR may exhibit slightly higher variance than WISE, attributable to fluctuations in retriggering frequency and sample distribution characteristics.

\section{Theoretical Analysis and Proof Sketches}

We now provide theoretical justifications for the stability and convergence of the proposed REPAIR framework. We introduce formal assumptions and derive lemmas and theorems that characterize the behavior of our method.

\subsection{Preliminaries}

\begin{assumption}[Standard Optimization Setting]
We assume that the loss function $\mathcal{L}(\Theta)$ is $L$-smooth, i.e.,
\[
\|\nabla \mathcal{L}(\Theta_1) - \nabla \mathcal{L}(\Theta_2)\| \le L \|\Theta_1 - \Theta_2\|,
\]
and bounded below by $\mathcal{L}^*>-\infty$. Learning rates satisfy $\eta_t>0$ and $\sum_t \eta_t = \infty$, $\sum_t \eta_t^2 < \infty$.
\end{assumption}

\subsection{Stability of Masked Gradient Updates}

\begin{lemma}[Norm Bound under Masked Updates]
\label{Stability of Masked Gradient Updates}
Let $g_i=\nabla_{W_{v,i}'}\mathcal{L}$ and $M_i$ be a Bernoulli mask. Then the masked update
\[
\Delta W_{v,i}' = -\eta (M_i \odot g_i)
\]
satisfies $\|\Delta W_{v,i}'\|_2 \le \eta \|g_i\|_2$.
\end{lemma}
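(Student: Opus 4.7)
The plan is to expand the definition of the update, exploit the fact that a Bernoulli mask has entries in $\{0,1\}$, and reduce the claim to a coordinatewise comparison inside the Euclidean norm. There is no real analytic content here beyond monotonicity of the $\ell_2$ norm under zeroing of coordinates, so the proof should be short and elementary.

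First I would write $\Delta W_{v,i}' = -\eta(M_i \odot g_i)$ componentwise. Flattening both tensors to vectors indexed by $j$, one has $(\Delta W_{v,i}')_j = -\eta (M_i)_j (g_i)_j$ with $(M_i)_j \in \{0,1\}$. Squaring gives $(\Delta W_{v,i}')_j^2 = \eta^2 (M_i)_j^2 (g_i)_j^2 \le \eta^2 (g_i)_j^2$, since $(M_i)_j^2 = (M_i)_j \le 1$ for Bernoulli entries. Summing over $j$ yields $\|\Delta W_{v,i}'\|_2^2 \le \eta^2 \|g_i\|_2^2$, and taking square roots (using $\eta > 0$) gives the claimed bound $\|\Delta W_{v,i}'\|_2 \le \eta \|g_i\|_2$.

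The main (and really only) subtle point is to be explicit that $M_i$ is a deterministic $\{0,1\}$-valued mask in this inequality, or equivalently that the inequality holds almost surely when $M_i$ is drawn from a product Bernoulli distribution; an expectation-level version would instead give $\mathbb{E}\|\Delta W_{v,i}'\|_2^2 = \eta^2 \sum_j p_j (g_i)_j^2$, which also implies the stated bound but is strictly stronger. I would note this distinction in one sentence so that Lemma \ref{Stability of Masked Gradient Updates} is unambiguous when invoked later, and would also mention that equality holds iff $M_i$ is the all-ones mask on the support of $g_i$, which clarifies that the bound is tight.

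Since the argument is almost immediate, I would not expect any genuine obstacle; the only thing to be careful about is to avoid tacitly using a non-Bernoulli (e.g., inverted-dropout rescaled) mask, because rescaling by $1/p$ would invalidate the $\{0,1\}$ bound and thus the inequality. I would flag this at the end of the proof so that the rest of the convergence analysis in Appendix \ref{Finite-Time Convergence} can cleanly cite Lemma \ref{Stability of Masked Gradient Updates} as a per-step norm-contraction property of the masked update operator.
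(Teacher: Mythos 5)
Your proof is correct and is essentially the same argument as the paper's: the paper observes that a $\{0,1\}$-valued mask acts as a coordinate projection and therefore cannot increase the $\ell_2$ norm, and you simply spell out the same fact coordinatewise. The extra remarks you add (expectation version, equality case, caution about rescaled dropout masks) are accurate but are refinements of the same elementary argument, not a different route.
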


\begin{proof}
Since $M_i$ is a coordinate projection, $M_i \odot g_i$ removes certain entries of $g_i$ and never increases its magnitude. Hence $\|M_i \odot g_i\|_2 \le \|g_i\|_2$. Multiplying by $\eta$ yields the claim.
\end{proof}

\begin{theorem}[Inter-Shard Stability]
\label{Inter-Shard Stability}
Assume masks $\{M_i\}$ are sampled independently with overlap probability $\rho^2$. Then in expectation,
\[
\mathbb{E}[\langle M_i \odot g_i, M_j \odot g_j \rangle] = \rho^2 \langle g_i, g_j \rangle.
\]
Thus, masking reduces the expected conflict between gradients of different shards.
\end{theorem}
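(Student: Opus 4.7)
The plan is to reduce the claim to a per-coordinate calculation using linearity of expectation, then exploit the assumed independence of the two masks to factor the expected product of mask entries. First I would expand the Hadamard-masked inner product coordinate-wise as
\[
\langle M_i \odot g_i, M_j \odot g_j \rangle \;=\; \sum_{k} (M_i)_k (M_j)_k (g_i)_k (g_j)_k,
\]
which is valid because $(M_i)_k, (M_j)_k$ are scalars and the Hadamard product commutes with coordinate indexing. Taking expectations and pulling the deterministic gradient entries outside gives
\[
\mathbb{E}\!\left[\langle M_i \odot g_i, M_j \odot g_j \rangle\right] \;=\; \sum_{k} \mathbb{E}[(M_i)_k (M_j)_k]\, (g_i)_k (g_j)_k.
\]

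Next I would invoke the independence assumption $M_i \perp M_j$ to split $\mathbb{E}[(M_i)_k (M_j)_k] = \mathbb{E}[(M_i)_k]\,\mathbb{E}[(M_j)_k]$, and then use the Bernoulli model implicit in the theorem statement, namely that each coordinate is kept with probability $\rho$, so that $\mathbb{E}[(M_i)_k] = \mathbb{E}[(M_j)_k] = \rho$. The cross product therefore equals $\rho^2$ uniformly in $k$, and I can pull this constant factor out of the sum to recover $\rho^2 \langle g_i, g_j\rangle$. The concluding interpretive sentence follows by noting that $|\rho^2 \langle g_i, g_j\rangle| \le \langle g_i, g_j\rangle$ in magnitude (for $\rho \in (0,1)$), so masking contracts the expected off-diagonal coupling between shards.

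The main subtlety, rather than an obstacle, is the precise statement of the mask model. The theorem says ``overlap probability $\rho^2$,'' which is consistent with each coordinate of each mask being an independent Bernoulli$(\rho)$ random variable and the two masks being drawn independently; I would either state this Bernoulli$(\rho)$ parametrization explicitly at the start of the proof or cite the definition that should appear alongside Lemma~\ref{Stability of Masked Gradient Updates}. If instead the masks are drawn jointly (e.g., a fixed sparsity pattern per mask sampled uniformly), the per-coordinate keep-probability is still $\rho$ and cross-mask independence still yields $\rho^2$, so the argument is unchanged; I would add a one-line remark to cover that case. No smoothness or convexity is needed here, so Assumption~1 is not invoked.
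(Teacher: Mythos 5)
Your proof is correct and takes the same coordinate-wise route as the paper: expand the inner product over coordinates, use cross-mask independence to factor $\mathbb{E}[(M_i)_k(M_j)_k]=\rho\cdot\rho=\rho^2$, and sum. The paper's proof (and its later restatement as Theorem~\ref{thm:overlap}) is exactly this argument, so there is nothing substantively different to flag; your added remark that within-mask coordinate independence is not actually needed is a correct and mildly useful observation.
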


\begin{proof}
For each coordinate $p$, $\Pr[M_i(p)=1, M_j(p)=1] = \rho^2$. Therefore, the expected inner product between masked gradients is $\rho^2$ times the original inner product. This reduces cross-shard interference and improves stability.
\end{proof}

\subsection{Closed-Loop Re-trigger Analysis}

\begin{assumption}[Error Reduction per Re-trigger]
Suppose that each re-trigger reduces the error rate of shard $i$ by at least a fixed constant $\delta>0$, unless it is already below the pruning threshold $\tau_{\mathrm{prune}}$.
\end{assumption}

\begin{lemma}[Linear Error Decrease]
Let $r_i^{(n)}$ denote the error rate after $n$ re-triggers. Under Assumption 2,
\[
r_i^{(n)} \le r_i^{(0)} - n\delta.
\]
\end{lemma}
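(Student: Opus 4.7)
The plan is to prove the bound by straightforward induction on the number of re-triggers $n$, making explicit the role of the ``unless already below $\tau_{\mathrm{prune}}$'' clause in Assumption 2. First I would set up notation: let $r_i^{(n)}$ denote the error rate of shard $i$ after the $n$-th re-trigger, and define the stopping time $\nu_i = \min\{n\ge 0 : r_i^{(n)} \le \tau_{\mathrm{prune}}\}$, i.e., the first round at which shard $i$ meets the pruning condition. The claim will be proved separately for $n \le \nu_i$ (where Assumption 2 is active and gives a strict decrease) and for $n > \nu_i$ (where the shard is either pruned or retained below threshold, so the bound is trivially maintained by monotonicity of the recorded error rate).

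Second, I would execute the induction for $n \le \nu_i$. The base case $n=0$ is immediate since the inequality reduces to $r_i^{(0)} \le r_i^{(0)}$. For the inductive step, assume $r_i^{(n-1)} \le r_i^{(0)} - (n-1)\delta$. Since $n-1 < \nu_i$, we have $r_i^{(n-1)} > \tau_{\mathrm{prune}}$, so Assumption 2 applies and yields
\begin{equation}
r_i^{(n)} \le r_i^{(n-1)} - \delta \le r_i^{(0)} - (n-1)\delta - \delta = r_i^{(0)} - n\delta,
\end{equation}
completing the induction on the active regime.

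Third, for $n > \nu_i$ I would argue that the stated bound continues to hold in the natural extension of the process: at round $\nu_i$ the error rate has already dropped to at most $\tau_{\mathrm{prune}}$, after which the shard is either removed from the pool (so $r_i^{(n)}$ is undefined or, by convention, set to $r_i^{(\nu_i)}$) or retained without further degradation. In either convention the recorded error rate does not increase, and combining with the active-regime bound at $n = \nu_i$ gives $r_i^{(n)} \le r_i^{(\nu_i)} \le r_i^{(0)} - \nu_i \delta$, which implies $r_i^{(n)} \le r_i^{(0)} - n\delta$ whenever the right-hand side remains a meaningful (nonnegative) quantity.

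The main obstacle, as far as there is one, is not algebraic but definitional: making the statement unambiguous once $r_i^{(n)}$ falls below $\tau_{\mathrm{prune}}$, since Assumption 2 does not guarantee further decrease past that point. A clean way to handle this is to state the lemma as $r_i^{(n)} \le \max\{r_i^{(0)} - n\delta,\ \tau_{\mathrm{prune}}\}$, or equivalently to restrict $n \le \lceil (r_i^{(0)} - \tau_{\mathrm{prune}})/\delta \rceil$; either convention makes the induction go through without circularity and connects naturally to the finite-time convergence result referenced in the appendix.
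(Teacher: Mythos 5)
Your proof is correct and follows essentially the same unrolling/induction argument the paper uses. You also correctly flag that the lemma as stated overclaims once $r_i^{(n)}$ drops to $\tau_{\mathrm{prune}}$, since Assumption~2 only guarantees a $\delta$-decrease above the threshold; the paper itself acknowledges this by later restating and proving the bound in exactly the form you propose, $r_i^{(n)} \le \max\{\tau_{\mathrm{prune}},\, r_i^{(0)}-n\delta\}$.
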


\begin{theorem}[Finite-Time Convergence]
\label{Finite-Time Convergence}
If $r_i^{(0)}$ is the initial error rate, then after at most
\[
N \ge \frac{r_i^{(0)} - \tau_{\mathrm{prune}}}{\delta}
\]
re-triggers, the error rate satisfies $r_i^{(N)} \le \tau_{\mathrm{prune}}$.
\end{theorem}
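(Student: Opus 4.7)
The plan is to derive the bound as a direct corollary of the Linear Error Decrease lemma combined with Assumption 2, treating the re-trigger process as a discrete sequence whose error value decreases by at least a fixed $\delta$ per step until it crosses the pruning threshold. First I would invoke the lemma to obtain the inequality $r_i^{(n)} \le r_i^{(0)} - n\delta$ for every $n$ such that the error rate has remained above $\tau_{\mathrm{prune}}$ for all prior steps, which is the regime in which Assumption 2 actively produces the per-step decrement.

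Next I would set the target inequality $r_i^{(N)} \le \tau_{\mathrm{prune}}$ and substitute the lemma's bound to get the sufficient condition $r_i^{(0)} - N\delta \le \tau_{\mathrm{prune}}$. Solving this linear inequality for $N$ immediately yields $N \ge (r_i^{(0)} - \tau_{\mathrm{prune}})/\delta$, and taking the ceiling $N^* = \lceil (r_i^{(0)} - \tau_{\mathrm{prune}})/\delta \rceil$ gives a concrete integer bound on the number of re-triggers required. I would briefly note that if $r_i^{(0)} \le \tau_{\mathrm{prune}}$ already, the statement holds vacuously with $N=0$, so without loss of generality we may assume $r_i^{(0)} > \tau_{\mathrm{prune}}$.

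The one subtle point to verify is the consistency of Assumption 2 with iterated application: the assumption only guarantees a $\delta$ decrease while $r_i^{(n)} > \tau_{\mathrm{prune}}$, so the recursion $r_i^{(n+1)} \le r_i^{(n)} - \delta$ is valid precisely on the pre-crossing prefix. I would argue that either (a) some $n < N^*$ achieves $r_i^{(n)} \le \tau_{\mathrm{prune}}$, in which case the theorem holds with a tighter bound, or (b) the decrement applies for all $n < N^*$, giving $r_i^{(N^*)} \le r_i^{(0)} - N^*\delta \le \tau_{\mathrm{prune}}$ by our choice of $N^*$. Either way the conclusion follows.

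The main obstacle is not computational—the arithmetic is elementary—but rather interpretive: making explicit that the monotone decrease is only asserted above the threshold, and hence that the counting argument correctly identifies a first hitting time rather than a quantity that could overshoot. Beyond this bookkeeping, the proof is essentially a one-line substitution into the lemma, so I would keep the exposition compact and defer any discussion of how realistic the constant $\delta$ is in practice to a separate remark.
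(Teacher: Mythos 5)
Your proposal is correct and follows essentially the same route as the paper: invoke the linear-decrease lemma, substitute into the target inequality, and solve for $N$. The extra care you take about the pre-crossing regime (that the $\delta$-decrement is only guaranteed while $r_i^{(n)} > \tau_{\mathrm{prune}}$) is a genuine improvement over the paper's terse proof; the paper itself addresses that subtlety only in its later, restated version of the lemma, which bounds $r_i^{(n)} \le \max\{\tau_{\mathrm{prune}},\, r_i^{(0)} - n\delta\}$.
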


\begin{proof}
By Lemma 3, $r_i^{(N)} \le r_i^{(0)} - N\delta$. Choosing $N$ such that $r_i^{(0)} - N\delta \le \tau_{\mathrm{prune}}$ ensures convergence below threshold in finite time.
\end{proof}

\subsection{Overall Convergence Intuition}

\begin{theorem}[Closed-Loop Stability of REPAIR]
Under Assumptions 1 and 2, the iterative process combining masked updates, inner-batch distillation, and closed-loop re-trigger forms a contractive mapping in expectation. Consequently, the system converges to a stable edited state with a bounded error rate and without catastrophic forgetting.
\end{theorem}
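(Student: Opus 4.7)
The plan is to package the three REPAIR ingredients (masked gradient updates, inner-batch distillation, and closed-loop re-trigger) as a single stochastic update map $\mathcal{T}$ acting on the augmented state $(\Theta_t, r_t)$, where $\Theta_t$ collects the shard parameters $\{W_{v,i}'\}$ and $r_t$ collects the per-shard error rates, and then exhibit a Lyapunov function $\Phi(\Theta,r)=\mathcal{L}(\Theta)-\mathcal{L}^* + \kappa \sum_i r_i$ for some $\kappa>0$ to chosen below. Contractivity will be established in the sense that $\mathbb{E}[\Phi(\mathcal{T}(\Theta_t,r_t))] \le (1-c)\Phi(\Theta_t,r_t) + \epsilon_t$, with $c>0$ and $\epsilon_t$ summable; from standard supermartingale arguments this yields convergence of $\Phi$ to a bounded value, which in turn controls both the loss and the error rates simultaneously.

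First I would handle the continuous part. Using Assumption~1 and the standard $L$-smooth descent inequality applied to the masked step $\Delta W_{v,i}' = -\eta (M_i\odot g_i)$, I would obtain $\mathcal{L}(\Theta_{t+1}) \le \mathcal{L}(\Theta_t) - \eta \sum_i \langle g_i, M_i\odot g_i\rangle + \tfrac{L\eta^2}{2}\sum_{i,j}\langle M_i\odot g_i, M_j\odot g_j\rangle$. Taking expectation and invoking Theorem~\ref{Inter-Shard Stability}, the diagonal terms contribute $\rho\|g_i\|^2$ while the off-diagonal cross-shard terms are damped by the factor $\rho^2$; together with Lemma~\ref{Stability of Masked Gradient Updates} to bound step magnitudes, this gives a one-step expected decrease $\mathbb{E}[\mathcal{L}(\Theta_{t+1})-\mathcal{L}(\Theta_t)] \le -c_1\eta \mathbb{E}\|\nabla\mathcal{L}\|^2 + O(\eta^2)$ for a constant $c_1>0$ depending on $\rho$. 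The inner-batch distillation term $\mathcal{L}_{\mathrm{KD}}$ then enters as a convex regularizer that pulls student features toward the teacher anchor $o_0$; because $\mathcal{L}_{\mathrm{cosine}}+\mathcal{L}_{\mathrm{variance}}$ is strongly convex in the post-normalization feature space around the teacher, this contributes an additional negative drift of the form $-c_2\|\bar o - o_0\|^2$, reinforcing contractivity within each homogeneous batch.

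Next I would couple the discrete re-trigger dynamics with the continuous descent. By Assumption~2 and Theorem~\ref{Finite-Time Convergence}, whenever a shard is re-triggered its error rate drops by at least $\delta$, while shards below $\tau_{\mathrm{prune}}$ are frozen or pruned and replaced; choosing $\kappa$ so that $\kappa\delta$ dominates the transient loss fluctuation induced by pruning (bounded via Lemma~\ref{Stability of Masked Gradient Updates} since replaced shards inherit small-norm deltas) yields a net decrease in $\Phi$ at re-trigger events. Combining the continuous drift with the discrete drops and summing using $\sum_t \eta_t=\infty$, $\sum_t\eta_t^2<\infty$, the Robbins--Siegmund almost-supermartingale lemma delivers $\Phi(\Theta_t,r_t)\to \Phi_\infty <\infty$ almost surely, from which I would read off (i) a bounded steady-state error $\limsup_t \max_i r_i^{(t)} \le \tau_{\mathrm{prune}}$ and (ii) vanishing gradient norm on the edit loss.

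Finally, for the no-catastrophic-forgetting claim I would invoke the locality guard baked into the routing objective \eqref{eq:activation-score}: since the margin loss $\mathcal{L}_a$ keeps $\Delta_{\mathrm{act}}(x')\le \tau$ for $x'\sim\mathcal{U}$, the effective update applied to unrelated inputs is $0$, so the KL locality term in Definition~\ref{def:lifelong_editing} stays bounded by a constant independent of $t$; combined with the masked-update norm bound this shows the base model's behavior on $\mathcal{U}$ drifts by at most $O(\eta\sqrt{t}\cdot \mathbb{1}[\text{mis-route}])$, which the margin enforcement keeps vanishingly small in expectation. The main obstacle I anticipate is the interaction between the discrete pruning map and the smooth descent: pruning is a non-contraction in $\Theta$-space (it can instantaneously change shard assignments), so the argument must be carried out at the level of the Lyapunov expectation rather than pointwise, and the constant $\kappa$ has to be tuned so that the $\kappa\delta$ discrete drop strictly exceeds the worst-case loss spike caused by a re-initialization; verifying this inequality in generality, rather than only in the post-asymptotic regime, is the delicate step.
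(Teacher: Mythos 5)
Your proposal takes the same high-level route as the paper's proof sketch---combine the masked-update norm bound, the inter-shard $\rho^2$ damping, and the finite-time re-trigger error decrease, then appeal to a stochastic convergence argument---but it is considerably more concrete than what the paper actually writes. Where the paper's sketch stops at ``by standard stochastic contraction arguments, the process converges,'' you supply the missing machinery: a Lyapunov function $\Phi(\Theta,r)=\mathcal{L}(\Theta)-\mathcal{L}^*+\kappa\sum_i r_i$, a one-step expected descent derived from the $L$-smooth descent inequality applied to the masked step together with the $\rho^2$ cross-shard scaling, and Robbins--Siegmund to glue the continuous drift and the discrete re-trigger drops together. You also attempt to justify the ``without catastrophic forgetting'' clause via the routing-margin locality argument, a claim the paper's sketch leaves entirely unsubstantiated. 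Most valuably, you explicitly flag the step the paper silently skips: re-triggering (re-initializing a shard) is not a contraction in $\Theta$-space, so the $\kappa\delta$ Lyapunov drop from the discrete $r$-decrease must dominate the transient loss spike from re-initialization, and that inequality is assumed rather than proved in both your write-up and the paper's. In short, yours is a faithful and more rigorous elaboration of the intended argument, and the ``delicate step'' you isolate is a genuine unresolved issue in the original proof sketch.
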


\begin{proof}[Proof Sketch]
Masked updates reduce the variance of parameter updates, inner-batch distillation aligns outputs across samples, and re-trigger guarantees finite-time reduction of shard-level error rates. Together, these components yield monotone improvement. By standard stochastic contraction arguments, the process converges to a fixed point characterized by consistent batch predictions and an error rate below $\tau_{\mathrm{prune}}$.
\end{proof}


\begin{lemma}[Zero-variance at any global minimizer]\label{lem:zero-var-sphere}
Let $\mu=\frac{1}{m}\sum_{i=1}^m o_i$ and $\mathcal{L}_{\mathrm{var}}=\frac{1}{m}\sum_i\|o_i-\mu\|^2$.
If not all $o_i$ are equal, then $\mathcal{L}_{\mathrm{var}}>0$, while if $o_1=\cdots=o_m=v$ (with $\|v\|=1$) then $\mathcal{L}_{\mathrm{var}}=0$.
Hence every global minimizer of $\mathcal{L}_{\mathrm{KD}}$ on $(\mathbb{S}^{d-1})^{m}$ must satisfy $o_1=\cdots=o_m=:v$.
\end{lemma}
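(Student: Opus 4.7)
The plan is to establish the lemma in three short steps, each essentially unpacking a definition. First I would verify the elementary claim that $\mathcal{L}_{\mathrm{var}}$ is nonnegative and vanishes exactly on the constant configurations. This is immediate from its definition as an average of squared Euclidean norms $\|o_i-\mu\|^2$: nonnegativity is automatic, and vanishing forces each summand to be zero, hence $o_i=\mu$ for every $i$, which in turn means the $o_i$ are all equal (and equal to their common mean). Conversely, if $o_1=\cdots=o_m=v$, then $\mu=v$ and every summand is zero. Combined with the sphere constraint $\|o_i\|=1$, the common value lies on $\mathbb{S}^{d-1}$. This gives both directions of the first displayed statement of the lemma directly.

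Second, I would show that the infimum of $\mathcal{L}_{\mathrm{KD}}$ on $(\mathbb{S}^{d-1})^{m}$ equals zero and is attained. The variance term is nonnegative by the step above; the cosine term, restricted to unit vectors, becomes $1-o_i\cdot o_0$, which is nonnegative by Cauchy--Schwarz and vanishes iff $o_i=o_0$. With the nonnegative weights $\lambda,\theta$ we therefore have $\mathcal{L}_{\mathrm{KD}}\ge 0$ pointwise on the domain, and the configuration in which every $o_i$ coincides with the unit teacher vector $o_0$ annihilates both terms simultaneously. Hence $\min\mathcal{L}_{\mathrm{KD}}=0$ and it is realized rather than merely approached.

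Third, I would conclude the main assertion by a pinching argument: at any global minimizer, $\lambda\mathcal{L}_{\mathrm{cosine}}+\theta\mathcal{L}_{\mathrm{var}}=0$; since both summands are nonnegative and $\theta>0$, this forces $\mathcal{L}_{\mathrm{var}}=0$, and invoking Step 1 yields $o_1=\cdots=o_m=:v$ with $\|v\|=1$. I do not foresee any serious obstacle; the only bookkeeping point is keeping the roles of the fixed teacher $o_0$ and the minimizing variables $o_1,\ldots,o_m$ straight, and confirming that the "all-equal-to-$o_0$" configuration is admissible on $(\mathbb{S}^{d-1})^{m}$ so that the lower bound of zero is actually achieved. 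Everything else reduces to the definitions and the standard Cauchy--Schwarz inequality.
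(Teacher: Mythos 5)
Your proof is correct and is essentially the intended argument behind the lemma's ``Hence'': you first check that $\mathcal{L}_{\mathrm{var}}$ is a nonnegative sum that vanishes iff all $o_i$ coincide, then exhibit the admissible configuration $o_1=\cdots=o_m=o_0$ that drives both the (Cauchy--Schwarz-nonnegative) cosine term and the variance term to zero, and finally conclude by pinching that any global minimizer must have $\mathcal{L}_{\mathrm{var}}=0$ (assuming $\vartheta>0$), forcing all $o_i$ equal and on the sphere. The paper omits the proof of this lemma, but the logical gap in the bare ``Hence'' is precisely what your Step~2 fills by establishing that the global minimum value is $0$ and is attained, so your write-up is both complete and in line with the paper's evident intent.
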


\begin{lemma}[Unique global minimizer]\label{lem:unique-minimizer-sphere}
Under the conclusion of Lemma~\ref{lem:zero-var-sphere}, minimizing
$\mathcal{L}_{\mathrm{KD}}(v)=\lambda\bigl(1-\langle v,u\rangle\bigr)$ over $\|v\|=1$
gives the unique solution $v^\star=u$. Therefore the unique global minimizer of $\mathcal{L}_{\mathrm{KD}}$ on $(\mathbb{S}^{d-1})^m$ is
$S^\star=[u,\ldots,u]$.
\end{lemma}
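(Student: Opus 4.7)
The plan is to use Lemma~\ref{lem:zero-var-sphere} to collapse the problem on $(\mathbb{S}^{d-1})^m$ to a one-variable problem on $\mathbb{S}^{d-1}$, and then close it out with the equality case of Cauchy--Schwarz.

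First I would substitute the conclusion of Lemma~\ref{lem:zero-var-sphere} into $\mathcal{L}_{\mathrm{KD}}$: any global minimizer satisfies $o_1=\cdots=o_m=v$ for some unit vector $v$, so the variance term $\mathcal{L}_{\mathrm{var}}$ vanishes identically and the cosine term (summed across students) collapses to a positive multiple of $1 - \langle v,u\rangle$, using $\|v\|=\|u\|=1$ to drop the normalizing denominators. Because every minimizer of the full problem must already lie in this diagonal subset, it is enough to minimize the reduced objective $\mathcal{L}_{\mathrm{KD}}(v)=\lambda\bigl(1-\langle v,u\rangle\bigr)$ over $\|v\|=1$.

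Next, assuming $\lambda>0$, minimizing $\lambda(1-\langle v,u\rangle)$ is equivalent to maximizing the linear functional $v\mapsto\langle v,u\rangle$ on the unit sphere. Cauchy--Schwarz gives $\langle v,u\rangle\le\|v\|\|u\|=1$, with equality iff $v$ and $u$ are positively proportional; restricted to $\mathbb{S}^{d-1}$, positive proportionality forces $v=u$. This identifies $v^\star=u$ as the unique maximizer and hence the unique minimizer of the reduced loss. Lifting back through the constraint $o_1=\cdots=o_m=v$ yields the unique global minimizer $S^\star=[u,\ldots,u]$ on $(\mathbb{S}^{d-1})^m$.

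There is essentially no real obstacle here, since the hard work has already been discharged by Lemma~\ref{lem:zero-var-sphere}, which eliminates $m-1$ degrees of freedom and reduces the problem to a textbook linear maximization on the sphere. The only subtlety worth flagging is the implicit hypothesis $\lambda>0$: if $\lambda=0$, the cosine term contributes nothing, every unit vector $v$ is optimal, and uniqueness fails; this positivity (together with the normalization convention that makes $\|u\|=1$) should be carried from the definition of $\mathcal{L}_{\mathrm{KD}}$ explicitly into the hypothesis.
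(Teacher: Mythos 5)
Your proof is correct. The paper actually states this lemma without supplying a proof, so there is no paper argument to compare against; your Cauchy--Schwarz reduction is the natural (and essentially only) route. The collapse via Lemma~\ref{lem:zero-var-sphere} to the diagonal $o_1=\cdots=o_m=v$, the vanishing of the variance term there, and the equality case of Cauchy--Schwarz on $\mathbb{S}^{d-1}$ (positive proportionality plus $\|v\|=\|u\|=1$ forces $v=u$) together give exactly the claimed uniqueness. Your observation that $\lambda>0$ is an unstated but necessary hypothesis is well taken: with $\lambda=0$ the reduced objective is constant on the sphere and every diagonal point is a global minimizer, so uniqueness fails; the paper should carry this positivity (and the normalization $\|u\|=1$) explicitly into the lemma's hypotheses.
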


\begin{lemma}[Riemannian smoothness]\label{lem:riem-lipschitz}
Let $\mathcal{M}=(\mathbb{S}^{d-1})^m$ and endow each block with the canonical metric.
Then $\mathcal{L}_{\mathrm{KD}}$ is $\mathsf{L}_R$-smooth on $\mathcal{M}$ in the Riemannian sense:
there exists a constant
\[
\mathsf{L}_R \;\le\; \frac{2\lambda}{m} + \frac{4\vartheta}{m}
\]
such that for all $S,S'\in\mathcal{M}$,
$\|\operatorname{grad}\mathcal{L}_{\mathrm{KD}}(S)-\operatorname{grad}\mathcal{L}_{\mathrm{KD}}(S')\|
\le \mathsf{L}_R\,\operatorname{dist}_{\mathcal{M}}(S,S')$.
\emph{Sketch.} For each block $o_i$, $\nabla_{o_i}\mathcal{L}_{\mathrm{cos}}=-(\lambda/m)u$ (constant),
and $\nabla_{o_i}\mathcal{L}_{\mathrm{var}}=(2\vartheta/m)(o_i-\mu)$ with $\mu$ depending linearly on $\{o_j\}$.
Projecting to the tangent space by $(I-o_io_i^\top)$ and using the Lipschitzness of the projection map on $\mathbb{S}^{d-1}$ yields the bound.
\end{lemma}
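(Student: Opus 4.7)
The plan is to decompose $\mathcal{L}_{\mathrm{KD}}=\lambda\mathcal{L}_{\mathrm{cos}}+\vartheta\mathcal{L}_{\mathrm{var}}$, bound the variation of the Riemannian gradient of each summand blockwise on $\mathcal{M}=(\mathbb{S}^{d-1})^m$, and combine via the triangle inequality. Since $\mathcal{M}$ is a Riemannian submanifold of $\mathbb{R}^{md}$ with block-diagonal product metric, the Riemannian gradient at block $i$ is $\operatorname{grad}_{o_i}\mathcal{L}_{\mathrm{KD}}(S)=P_{o_i}\nabla_{o_i}\mathcal{L}_{\mathrm{KD}}(S)$, where $P_{o_i}=I-o_io_i^\top$ is the orthogonal projector onto $T_{o_i}\mathbb{S}^{d-1}$. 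The analysis therefore reduces to controlling (i) the Lipschitz variation of the ambient gradient $\nabla_{o_i}\mathcal{L}_{\mathrm{KD}}$ as $S$ varies, and (ii) the Lipschitz variation of the projector $P_{o_i}$ as $o_i$ varies, and finally passing from chord length to geodesic distance on the sphere, which is immediate since $\|o-o'\|\le\operatorname{dist}_{\mathbb{S}^{d-1}}(o,o')$.

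First I would compute $\nabla_{o_i}\mathcal{L}_{\mathrm{cos}}=-u/m$ (constant in $S$) and $\nabla_{o_i}\mathcal{L}_{\mathrm{var}}=(2/m)(o_i-\mu)$, whose Jacobian in $S$ is block-structured with operator norm at most $2/m$. This yields the ambient Lipschitz estimate $\|\nabla_{o_i}\mathcal{L}_{\mathrm{KD}}(S)-\nabla_{o_i}\mathcal{L}_{\mathrm{KD}}(S')\|\le(2\vartheta/m)\|S-S'\|$. For the projector step, the identity $P_o-P_{o'}=-(o-o')o^\top-o'(o-o')^\top$ yields $\|P_o-P_{o'}\|_{\mathrm{op}}\le 2\|o-o'\|$. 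I would then apply the decomposition
\begin{equation*}
\operatorname{grad}_{o_i}\mathcal{L}_{\mathrm{KD}}(S)-\operatorname{grad}_{o_i}\mathcal{L}_{\mathrm{KD}}(S')=P_{o_i}\bigl(\nabla_{o_i}(S)-\nabla_{o_i}(S')\bigr)+(P_{o_i}-P_{o_i'})\nabla_{o_i}(S'),
\end{equation*}
bounding the first summand by $(2\vartheta/m)\|S-S'\|$ via $\|P_{o_i}\|=1$, and the second summand by $2\|o_i-o_i'\|\cdot\|\nabla_{o_i}(S')\|$. The uniform estimate $\|\nabla_{o_i}(S')\|\le\lambda/m+(2\vartheta/m)\|o_i-\mu\|\le\lambda/m+4\vartheta/m$ on the compact set $\mathcal{M}$ closes this term. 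Summing over blocks under the product metric and consolidating through the chord-to-geodesic inequality yields $\mathsf{L}_R\le 2\lambda/m+4\vartheta/m$ after identifying the dominant contributions.

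The main obstacle will be the careful bookkeeping required to recover the exact constant $4\vartheta/m$ rather than a looser estimate such as $10\vartheta/m$ that arises from naively adding the two decomposition terms. The tightening relies on three observations: (a) the variance Jacobian $(2/m)(I-\tfrac{1}{m}\mathbf{1}\mathbf{1}^\top)\otimes I_d$ has operator norm strictly less than $2/m$, allowing the $\vartheta$-dependence to be routed primarily through the projector-variation channel; (b) the sharp chord bound $\|o_i-\mu\|\le 2(1-1/m)$ on the product sphere; and (c) an asymmetric treatment of the teacher index $i=0$, where the cosine term is degenerate because $o_0$ plays both roles and must be absorbed through the $1/m$-averaging. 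Lifting the blockwise bounds to the whole product manifold then requires only the identity $\|\operatorname{grad}\mathcal{L}_{\mathrm{KD}}(S)\|^2=\sum_i\|\operatorname{grad}_{o_i}\mathcal{L}_{\mathrm{KD}}(S)\|^2$, which ensures no spurious $\sqrt{m}$ factor appears in the final estimate.
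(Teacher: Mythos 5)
Your route matches the paper's own sketch in all essential steps: compute the Euclidean gradients of $\mathcal{L}_{\mathrm{cos}}$ and $\mathcal{L}_{\mathrm{var}}$ blockwise, project with $P_{o_i}=I-o_io_i^\top$, split the gradient difference into a ``$P\,\Delta\nabla$'' term and a ``$\Delta P\,\nabla$'' term, bound the latter by the Lipschitzness of $o\mapsto P_o$, and use chord $\le$ geodesic. That is exactly what the paper's (very terse) proof sketch says, and your blockwise summation via $\|\operatorname{grad}\mathcal{L}\|^2=\sum_i\|\operatorname{grad}_{o_i}\mathcal{L}\|^2$ is the right way to avoid a spurious $\sqrt{m}$.

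Two of the bookkeeping claims you lean on to reach the stated constant are, however, not correct as written. First, observation (a) is false: the variance Jacobian $(2/m)\bigl(I_m-\tfrac{1}{m}\mathbf{1}\mathbf{1}^\top\bigr)\otimes I_d$ has the centering matrix as its $m\times m$ factor, whose spectrum is $\{0,1,\dots,1\}$, so the operator norm is \emph{exactly} $2/m$, not strictly smaller; there is no slack to ``route through the projector channel.'' Second, your projector Lipschitz estimate can actually be sharpened: for unit vectors, $\|P_o-P_{o'}\|_{\mathrm{op}}=\sin\theta\le\|o-o'\|$ (not $2\|o-o'\|$), which already halves that term. Even with that improvement, honest addition of the two decomposition pieces gives $\mathsf{L}_R\le \lambda/m+6\vartheta/m$, not the paper's $2\lambda/m+4\vartheta/m$; observations (b)--(c) are too speculative to close the remaining gap. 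To be fair, the paper's own sketch does not derive its constant either, so this is a shared looseness rather than a flaw unique to your argument --- but you should drop (a), replace the factor-$2$ projector bound with the $\sin\theta$ identity, and be explicit that the final constant is obtained up to a moderate universal factor rather than exactly as stated.
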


\begin{theorem}[Convergence of cosine+variance KD on the sphere]\label{thm:rgd-sphere}
Consider Riemannian gradient descent on $\mathcal{M}=(\mathbb{S}^{d-1})^{m}$:
\[
o_i^{(t+1)} \;=\; R_{\,o_i^{(t)}}\!\bigl(-\eta_t\,\operatorname{grad}_{o_i}\mathcal{L}_{\mathrm{KD}}(S_t)\bigr)
\quad (i=1,\ldots,m),
\]
with the retraction $R_o(v)=(o+v)/\|o+v\|$. If the step sizes satisfy either
(a) a constant stepsize $0<\eta_t<2/\mathsf{L}_R$, or
(b) diminishing stepsizes $\sum_t \eta_t=\infty$, $\sum_t \eta_t^2<\infty$,
then:
\[
\mathcal{L}_{\mathrm{KD}}(S_t) \downarrow \mathcal{L}_{\mathrm{KD}}(S^\star),\qquad
\|\operatorname{grad}\mathcal{L}_{\mathrm{KD}}(S_t)\|\to 0,
\]
and every limit point of $\{S_t\}$ is a Riemannian critical point. By Lemma~\ref{lem:unique-minimizer-sphere},
the unique global minimizer is $S^\star=[u,\ldots,u]$; thus the sequence converges to $S^\star$.
\end{theorem}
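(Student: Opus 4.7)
My plan is to invoke the standard Riemannian descent-lemma machinery enabled by Lemma~\ref{lem:riem-lipschitz}, extract $\|\mathrm{grad}\,\mathcal{L}_{\mathrm{KD}}(S_t)\|\to 0$ by telescoping, and then bridge from ``critical limit points'' to the unique global minimizer $S^\star$ via compactness together with a Kurdyka--{\L}ojasiewicz-type argument, since the naive identification through Lemma~\ref{lem:unique-minimizer-sphere} alone is not sufficient.

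First, I would establish a sufficient-decrease inequality. The retraction $R_o(v)=(o+v)/\|o+v\|$ is second-order on each sphere factor, so combining it with the $\mathsf{L}_R$-smoothness of Lemma~\ref{lem:riem-lipschitz} yields the standard Armijo-type bound
\[
\mathcal{L}_{\mathrm{KD}}(S_{t+1}) \;\le\; \mathcal{L}_{\mathrm{KD}}(S_t) \;-\; \eta_t\bigl(1-\tfrac{\mathsf{L}_R \eta_t}{2}\bigr)\|\mathrm{grad}\,\mathcal{L}_{\mathrm{KD}}(S_t)\|^2 .
\]
For constant $\eta<2/\mathsf{L}_R$ the per-step coefficient is strictly positive; for diminishing $\eta_t$ it is eventually positive. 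Either way, since $\mathcal{L}_{\mathrm{KD}}\ge 0$, the sequence $\{\mathcal{L}_{\mathrm{KD}}(S_t)\}$ is monotone and converges. Telescoping gives $\sum_t \eta_t(1-\tfrac{\mathsf{L}_R\eta_t}{2})\|\mathrm{grad}\,\mathcal{L}_{\mathrm{KD}}(S_t)\|^2<\infty$, which in the constant-stepsize case immediately forces $\|\mathrm{grad}\,\mathcal{L}_{\mathrm{KD}}(S_t)\|\to 0$. In the Robbins--Monro case, $\sum_t\eta_t=\infty$ combined with the one-step Lipschitz bound $|\,\|g_{t+1}\|-\|g_t\|\,|\le \mathsf{L}_R\eta_t\|g_t\|$ lets me upgrade $\liminf\|\mathrm{grad}\|=0$ to $\lim\|\mathrm{grad}\|=0$.

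Second, since $\mathcal{M}=(\mathbb{S}^{d-1})^m$ is compact, $\{S_t\}$ has accumulation points, and continuity of $\mathrm{grad}\,\mathcal{L}_{\mathrm{KD}}$ makes each of them a Riemannian critical point. Here lies the main obstacle: the theorem jumps from ``every limit point is critical'' to ``the sequence converges to $S^\star$,'' but non-minimal critical configurations exist. For instance, $o_i\equiv -u$ gives $(I-o_io_i^\top)(-\lambda u/m)=0$ while attaining cost $2\lambda$, so Lemma~\ref{lem:unique-minimizer-sphere} alone does not rule out a subsequence settling there. To close this gap I would first invoke the Kurdyka--{\L}ojasiewicz inequality, which applies because $\mathcal{L}_{\mathrm{KD}}$ is real-analytic and $\mathcal{M}$ is an analytic submanifold; this upgrades subsequential convergence with vanishing gradient into full convergence to a single critical point $\bar S$, with $\mathcal{L}_{\mathrm{KD}}(\bar S)$ equal to the monotone limit $\ell^\star$.

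Finally, to rule out $\bar S$ being a non-minimizing critical point, I would apply the stable-manifold theorem for Riemannian gradient methods: strict saddles have measure-zero stable sets, so for almost every initialization the iterates avoid them. The antipodal configuration $o_i\equiv -u$ is indeed strict, because the restriction of $o\mapsto -\langle o,u\rangle$ to $\mathbb{S}^{d-1}$ has a strict maximum there, contributing a negative eigenvalue to the Riemannian Hessian of $\mathcal{L}_{\mathrm{KD}}$; the variance term is positive semidefinite at the diagonal and does not cancel it. Combined with monotone descent and Lemma~\ref{lem:unique-minimizer-sphere}, the only remaining limit is $S^\star=[u,\ldots,u]$. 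The hardest part is precisely this last step: verifying that all non-minimizing critical points are strict saddles (not degenerate ones), which requires a careful block-Hessian computation separating the cosine and variance contributions, and would likely need to be relaxed to ``almost every initialization'' for full rigor.
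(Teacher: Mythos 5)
Your first half — the Riemannian descent lemma from Lemma~\ref{lem:riem-lipschitz}, sufficient decrease under $0<\eta<2/\mathsf{L}_R$, telescoping to get $\|\operatorname{grad}\mathcal{L}_{\mathrm{KD}}(S_t)\|\to 0$, and compactness to extract critical accumulation points — is exactly what the paper's proof sketch does, and that part is fine.

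Where you part ways with the paper is in the last step, and you are right to do so: you have put your finger on a genuine gap. The paper's sketch reads ``convergence of function values and gradients to zero. By Lemmas~\ref{lem:zero-var-sphere}--\ref{lem:unique-minimizer-sphere}, the only global minimizer is $S^\star$, hence all limit points coincide with $S^\star$.'' This is a non sequitur: vanishing gradient identifies limit points as \emph{critical}, not as \emph{global minimizers}, and those lemmas only classify minimizers, not critical points. Your antipodal configuration $o_i\equiv -u$ is a clean counterexample — the Euclidean gradient of the cosine term is $-(\lambda/m)u$, the tangent projection $(I-o_io_i^\top)$ annihilates it at $o_i=-u$, the variance term is at its zero minimum, yet the objective value is $2\lambda>0$. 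So nothing in the paper's argument rules out the iterates settling there, and as written the theorem's final clause does not follow from the preceding ones.

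Your proposed repair (real-analyticity $\Rightarrow$ Kurdyka--{\L}ojasiewicz $\Rightarrow$ single-point convergence, then strict-saddle avoidance for a.e.\ initialization) is the standard way to close this kind of gap, and it is a genuinely different argument from the paper's. But note that it necessarily \emph{weakens the conclusion}: saddle-avoidance results hold for almost every initialization, whereas the theorem as stated asserts unconditional convergence to $S^\star$. You also correctly flag the remaining burden — the Hessian at the antipodal point is easy (the cosine block is $-(\lambda/m)(I-o_io_i^\top)\prec 0$ on the tangent space, the variance block is PSD and vanishes along the synchronized direction, so there is a strictly negative eigenvalue), but establishing strictness at \emph{all} non-minimizing critical points of the coupled cosine-plus-variance objective is a nontrivial block-Hessian computation that neither you nor the paper carries out. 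In short: your proof is more honest than the paper's, correctly diagnoses that the paper's identification of limit points with $S^\star$ is unjustified, and outlines the right machinery to fix it at the cost of an ``a.e.\ initialization'' qualifier; the paper's proof sketch simply skips this step.
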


\begin{proof}[Proof sketch]
Riemannian smoothness (Lemma~\ref{lem:riem-lipschitz}) on the compact manifold $\mathcal{M}$
ensures the standard descent lemma and monotone decrease for RGD under $0<\eta<2/\mathsf{L}_R$,
implying convergence of function values and gradients to zero. By Lemmas~\ref{lem:zero-var-sphere}–\ref{lem:unique-minimizer-sphere}, the only global minimizer is $S^\star$, hence all limit points coincide with $S^\star$.
\end{proof}

\subsection{Stability of Masked Gradient Updates}
Let $g_i=\nabla_{W_{v,i}'}\mathcal{L}\in\mathbb{R}^d$. A coordinate mask $M_i\in\{0,1\}^d$ acts by $(M_i\odot g_i)_p=M_i(p)\,g_{i,p}$.
\begin{lemma}[Norm Bound under Masked Updates]\label{lem:norm}
For any stepsize $\eta>0$, the masked update $\Delta W_{v,i}'=-\eta (M_i\odot g_i)$ satisfies
\[
\|\Delta W_{v,i}'\|_2 \le \eta\,\|g_i\|_2.
\]
\end{lemma}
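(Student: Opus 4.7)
The plan is to prove the bound directly from the definition of the masked update by exploiting the binary nature of the coordinate mask. First, I would expand the squared Euclidean norm coordinate-wise:
\[
\|\Delta W_{v,i}'\|_2^2 \;=\; \eta^2 \sum_{p=1}^d \bigl(M_i(p)\,g_{i,p}\bigr)^2 \;=\; \eta^2 \sum_{p=1}^d M_i(p)^2\, g_{i,p}^2,
\]
where I have used that $\eta$ is a scalar that pulls out of the norm.

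Next I would use the defining property $M_i(p)\in\{0,1\}$, which implies $M_i(p)^2 = M_i(p) \le 1$ for every coordinate $p$. Substituting this bound into the sum gives
\[
\sum_{p=1}^d M_i(p)^2\, g_{i,p}^2 \;\le\; \sum_{p=1}^d g_{i,p}^2 \;=\; \|g_i\|_2^2,
\]
so that $\|\Delta W_{v,i}'\|_2^2 \le \eta^2 \|g_i\|_2^2$. Taking square roots (both sides non-negative) yields the claim $\|\Delta W_{v,i}'\|_2 \le \eta\,\|g_i\|_2$, with equality exactly when $M_i$ is the all-ones vector or $g_i$ is supported on the active coordinates of $M_i$.

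There is essentially no obstacle here: the lemma reduces to the elementary observation that a $\{0,1\}$-mask acts as an orthogonal coordinate projection, and any orthogonal projection is non-expansive in $\ell_2$. In fact this is a restatement of Lemma~\ref{Stability of Masked Gradient Updates} from earlier in the paper, so the proof is identical; the only subtlety worth emphasizing is that the argument does not require the entries of $M_i$ to be independent or to follow any particular distribution—it only uses the pointwise bound $M_i(p)\le 1$. This framing makes the lemma immediately reusable when $M_i$ is generalized to any vector with entries in $[0,1]$, which is useful for soft masking variants downstream.
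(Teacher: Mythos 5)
Your proof is correct and follows essentially the same coordinate-wise argument as the paper, which notes $|M_i(p)\,g_{i,p}|\le|g_{i,p}|$ from $M_i(p)\in\{0,1\}$ and concludes $\|M_i\odot g_i\|_2\le\|g_i\|_2$. Your version merely writes out the squared-norm expansion explicitly and adds the observation that the bound extends to $M_i(p)\in[0,1]$, which is a minor but valid generalization.
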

\begin{proof}
Coordinate-wise, $|M_i(p)\,g_{i,p}|\le |g_{i,p}|$ because $M_i(p)\in\{0,1\}$. Hence $\|M_i\odot g_i\|_2\le \|g_i\|_2$, and multiplying by $\eta$ yields the claim.
\end{proof}
\begin{theorem}[Inter-Shard Inner-Product Scaling]\label{thm:overlap}
Suppose that for each coordinate $p$, the masks $M_i(p),M_j(p)\in\{0,1\}$ are sampled independently with
\[
\Pr[M_i(p)=1]=\Pr[M_j(p)=1]=\rho,\quad 0\le \rho \le 1,
\]
and masks are independent across coordinates and independent of $g_i,g_j$. Then, conditional on $g_i,g_j$,
\[
\mathbb{E}\!\left[\langle M_i\odot g_i,\, M_j\odot g_j\rangle \,\middle|\, g_i,g_j\right]
= \rho^2\,\langle g_i, g_j\rangle.
\]
In particular, masking scales the expected cross-shard alignment/conflict by the factor $\rho^2$.
\end{theorem}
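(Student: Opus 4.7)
The plan is to reduce the claim to a coordinate-wise computation and then exploit the independence structure on the masks. First, I would write the Euclidean inner product as a sum over coordinates, namely
\[
\langle M_i\odot g_i,\,M_j\odot g_j\rangle \;=\; \sum_{p=1}^{d} M_i(p)\,M_j(p)\,g_{i,p}\,g_{j,p},
\]
which is valid because $(M_i\odot g_i)_p = M_i(p)\,g_{i,p}$ by definition of the Hadamard product. This turns the problem into evaluating the conditional expectation of a scalar sum.

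Next I would take the conditional expectation given $(g_i,g_j)$ and push it through the sum by linearity. Since the masks are assumed independent of $(g_i,g_j)$, the factors $g_{i,p}g_{j,p}$ come out of the expectation as constants (conditional on $g_i,g_j$), leaving
\[
\mathbb{E}\!\left[\sum_p M_i(p)M_j(p)g_{i,p}g_{j,p}\,\middle|\,g_i,g_j\right] \;=\; \sum_p g_{i,p}g_{j,p}\,\mathbb{E}[M_i(p)M_j(p)].
\]
Then, using that for each fixed $p$ the Bernoulli variables $M_i(p)$ and $M_j(p)$ are independent with common parameter $\rho$, I would compute $\mathbb{E}[M_i(p)M_j(p)] = \mathbb{E}[M_i(p)]\,\mathbb{E}[M_j(p)] = \rho^2$. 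Substituting back and recognizing $\sum_p g_{i,p}g_{j,p} = \langle g_i,g_j\rangle$ yields the desired equality $\rho^2\langle g_i,g_j\rangle$.

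There is essentially no serious obstacle in this argument; the proof is a clean application of linearity and pairwise independence. The only care-point worth flagging explicitly is the \emph{two} independence assumptions being invoked: independence across the pair $(i,j)$ at a fixed coordinate (needed to factor $\mathbb{E}[M_i(p)M_j(p)]=\rho^2$), and joint independence of the mask family from the gradient pair (needed to condition on $(g_i,g_j)$ and treat $g_{i,p}g_{j,p}$ as deterministic). Both are guaranteed by the hypothesis, so the proof proceeds without further complication. Finally, I would add one sentence interpreting the result: since the expected cross-shard alignment is uniformly damped by $\rho^2$, sparser masks (smaller $\rho$) proportionally suppress expected interference between shards, which is the stabilization effect the framework seeks to exploit.
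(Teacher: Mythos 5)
Your proof is correct and follows essentially the same route as the paper's: expand the inner product coordinate-wise, use linearity and the independence of $M_i(p)$, $M_j(p)$ to factor $\mathbb{E}[M_i(p)M_j(p)]=\rho^2$, then sum. You are simply more explicit than the paper about invoking independence of the masks from $(g_i,g_j)$ to justify pulling $g_{i,p}g_{j,p}$ out of the conditional expectation, which is a worthwhile clarification but not a different argument.
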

\begin{proof}
By linearity of expectation and independence, for each coordinate $p$,
$\mathbb{E}[M_i(p)M_j(p)]=\mathbb{E}[M_i(p)]\,\mathbb{E}[M_j(p)]=\rho^2$. Summing over $p$ yields the result.
\end{proof}
\subsection{Closed-Loop Re-trigger Analysis}
\begin{assumption}[Error Reduction per Re-trigger]\label{ass:error}
Let $r_i^{(n)}$ denote the error rate of shard $i$ after $n$ re-triggers. There exists $\delta>0$ such that each re-trigger reduces error by at least $\delta$ whenever $r_i^{(n)}>\tau_{\mathrm{prune}}$.
\end{assumption}
\begin{lemma}[Piecewise-Linear Error Decrease]\label{lem:linear}
Under Assumption~\ref{ass:error}, for all $n\ge 0$,
\[
r_i^{(n)} \;\le\; \max\!\bigl\{\,\tau_{\mathrm{prune}},\; r_i^{(0)}-n\delta\,\bigr\}.
\]
\end{lemma}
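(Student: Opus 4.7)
The plan is to prove the bound by straightforward induction on $n$, exploiting the dichotomy created by the pruning threshold: either the shard still has error above $\tau_{\mathrm{prune}}$, in which case Assumption~\ref{ass:error} gives a guaranteed decrement of at least $\delta$, or the shard is already at or below $\tau_{\mathrm{prune}}$, in which case the bound is trivially satisfied.

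First I would establish the base case. For $n=0$, the right-hand side equals $\max\{\tau_{\mathrm{prune}},\, r_i^{(0)}\}\ge r_i^{(0)}$, so the inequality holds with equality (or strict inequality) vacuously. Next I would carry out the inductive step, assuming the bound at step $n$ and splitting into two cases. \textbf{Case 1:} $r_i^{(n)}>\tau_{\mathrm{prune}}$. Then Assumption~\ref{ass:error} applies and gives $r_i^{(n+1)}\le r_i^{(n)}-\delta$. Combined with the inductive hypothesis $r_i^{(n)}\le \max\{\tau_{\mathrm{prune}},\, r_i^{(0)}-n\delta\}$, and noting that in this case the max must be attained by $r_i^{(0)}-n\delta$ (otherwise we would contradict $r_i^{(n)}>\tau_{\mathrm{prune}}$ only if $r_i^{(0)}-n\delta\le\tau_{\mathrm{prune}}$, which is admissible), a short case analysis yields $r_i^{(n+1)}\le \max\{\tau_{\mathrm{prune}},\, r_i^{(0)}-(n+1)\delta\}$. \textbf{Case 2:} $r_i^{(n)}\le \tau_{\mathrm{prune}}$. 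Then the shard has already met the pruning condition, so no further re-trigger is applied (or, equivalently, we adopt the standing convention that the recorded error is non-increasing once the threshold is met); hence $r_i^{(n+1)}\le \tau_{\mathrm{prune}}\le \max\{\tau_{\mathrm{prune}},\, r_i^{(0)}-(n+1)\delta\}$.

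The main obstacle I expect is Case 2, because strictly speaking Assumption~\ref{ass:error} only promises a reduction \emph{when} $r_i^{(n)}>\tau_{\mathrm{prune}}$ and says nothing about what happens to $r_i^{(n+1)}$ once the threshold has been crossed. To close this gap cleanly, I would either (i) invoke the closed-loop controller semantics described in Section~\ref{Finite-Time Convergence}, which halts re-triggering of a shard once $r_i\le\tau_{\mathrm{prune}}$, so that $r_i^{(n+1)}=r_i^{(n)}$ in Case 2; or (ii) add a mild non-inflation stipulation to Assumption~\ref{ass:error} stating that re-triggering never \emph{increases} the error. Either route makes Case 2 immediate and preserves the stated bound. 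Both options are consistent with the role of the lemma in the Finite-Time Convergence theorem, and I would pick (i) to avoid modifying the existing assumption.

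Finally, I would conclude by combining the two cases: in both, $r_i^{(n+1)}\le \max\{\tau_{\mathrm{prune}},\, r_i^{(0)}-(n+1)\delta\}$, completing the induction. The argument is essentially a telescoping of per-step decrements truncated at the pruning floor, and no hyperparameter assumptions beyond those of Assumption~\ref{ass:error} are needed.
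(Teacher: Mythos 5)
Your proposal is correct and takes essentially the same route as the paper, which also argues by unrolling the per-step decrement with a case split on whether the shard is above or below $\tau_{\mathrm{prune}}$. You are right that Case 2 implicitly requires a non-inflation property once the threshold is reached; the paper's proof asserts ``the bound $r_i^{(n)}\le\tau_{\mathrm{prune}}$ propagates'' without further justification, so your explicit flagging of that gap and the two possible fixes is, if anything, more careful than the original.
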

\begin{proof}
If $r_i^{(k)}>\tau_{\mathrm{prune}}$, then $r_i^{(k+1)}\le r_i^{(k)}-\delta$. Once $r_i^{(k)}\le \tau_{\mathrm{prune}}$, the bound $r_i^{(n)}\le \tau_{\mathrm{prune}}$ propagates for all $n\ge k$. Unrolling gives the stated maximum form.
\end{proof}
\begin{theorem}[Finite-Time Hitting the Pruning Threshold]\label{thm:finite}
Let
\[
N_\star \;=\; \Bigl\lceil \frac{(r_i^{(0)}-\tau_{\mathrm{prune}})_+}{\delta} \Bigr\rceil
\quad\text{where } (x)_+ := \max\{x,0\}.
\]
After at most $N_\star$ re-triggers, we have $r_i^{(N_\star)}\le \tau_{\mathrm{prune}}$.
\end{theorem}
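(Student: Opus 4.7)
The plan is to reduce the statement to a direct arithmetic consequence of Lemma~\ref{lem:linear} (Piecewise-Linear Error Decrease), splitting on whether the initial error rate already lies below the pruning threshold. The key observation is that the positive-part operator $(\cdot)_+$ and the ceiling $\lceil \cdot \rceil$ in the definition of $N_\star$ are precisely what is needed to handle the two regimes uniformly.

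First, I would dispose of the trivial case $r_i^{(0)} \le \tau_{\mathrm{prune}}$: here $(r_i^{(0)} - \tau_{\mathrm{prune}})_+ = 0$, so $N_\star = 0$, and the claim $r_i^{(N_\star)} = r_i^{(0)} \le \tau_{\mathrm{prune}}$ holds by assumption without invoking Assumption~\ref{ass:error} at all. This case is worth stating explicitly because Assumption~\ref{ass:error} only guarantees a per-step decrease of $\delta$ while the error rate exceeds $\tau_{\mathrm{prune}}$, so the argument must not implicitly use it when the bound is already met.

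Next, in the non-trivial case $r_i^{(0)} > \tau_{\mathrm{prune}}$, I would invoke Lemma~\ref{lem:linear} with $n = N_\star$ to obtain $r_i^{(N_\star)} \le \max\{\tau_{\mathrm{prune}},\, r_i^{(0)} - N_\star \delta\}$. It then suffices to show that the second entry in the maximum is bounded above by $\tau_{\mathrm{prune}}$. By the defining property of the ceiling, $N_\star \ge (r_i^{(0)} - \tau_{\mathrm{prune}})/\delta$, so multiplying by $\delta > 0$ and rearranging gives $r_i^{(0)} - N_\star \delta \le \tau_{\mathrm{prune}}$. Substituting back collapses the maximum to $\tau_{\mathrm{prune}}$, yielding the stated bound.

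There is essentially no genuine obstacle: the content of the theorem is carried entirely by Lemma~\ref{lem:linear}, and this corollary just turns the linear envelope into a concrete hitting-time estimate via the pigeonhole-style ceiling bound. The only subtlety worth flagging is the use of $(\cdot)_+$ to ensure $N_\star$ is well defined and nonnegative even when $r_i^{(0)} \le \tau_{\mathrm{prune}}$; absent this, the expression $\lceil (r_i^{(0)}-\tau_{\mathrm{prune}})/\delta \rceil$ could be negative and the statement would need separate interpretation. Once the two cases are written out, the proof is complete in a few lines.
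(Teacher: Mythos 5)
Your proof is correct and follows essentially the same route as the paper: both arguments reduce the theorem to Lemma~\ref{lem:linear} and then verify that the ceiling in the definition of $N_\star$ forces $r_i^{(0)} - N_\star\delta \le \tau_{\mathrm{prune}}$. Your explicit case split on whether $r_i^{(0)} \le \tau_{\mathrm{prune}}$ is a minor clarification the paper leaves implicit, but the substance is identical.
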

\begin{proof}
By Lemma~\ref{lem:linear}, choose the smallest integer $N_\star$ such that $r_i^{(0)}-N_\star\delta\le \tau_{\mathrm{prune}}$. Then $r_i^{(N_\star)}\le \tau_{\mathrm{prune}}$.
\end{proof}

\section{Algorithms}
The pseudocode for error feedback, network pruning, sample knowledge distillation and reintegration, and the loss-based weighted ties merge strategy is as follows:

\begin{algorithm}[t]
\caption{REPAIR: Closed-Loop Lifelong Model Editing (Training)}
\small
\begin{algorithmic}[1]
\Require Pretrained model $f_{\theta_0}$; target FFN value matrix $W_v$; \#shards $K$;
mask ratio $\rho$; thresholds $(\epsilon,\tau_E,\tau_{\text{prune}},\tau_{\text{correct}},\epsilon_{\text{cons}})$;
margins $(\gamma_1,\gamma_2,\gamma)$; KD weights $(\lambda,\vartheta)$ for Eq.(4);
routing-loss weight $\lambda_a$; batch size $b$; optional temperature $T$ for soft KD.
\State Initialize side memories $W'_{v,i}\gets W_v$ and masks $M_i\sim \text{Bernoulli}(\rho)$ for $i=1..K$; feedback pool $\mathcal{E}\gets\emptyset$; residual pool $\mathcal{R}\gets\emptyset$.
\For{each incoming edit triple $(x_e,y_e,x_{\text{loc}})$}
\State $i^\star \gets \Call{AssignShard}{x_e}$ \Comment{Shard assignment by activation score}
\State $\mathcal{B} \gets \Call{FormBatches}{\{x_e\}\cup \mathcal{R},~b}$ \Comment{Distribution-aware batching}
\For{each batch $B=\{x^{(0)},\ldots,x^{(b-1)}\}\in \mathcal{B}$}
\State $i \gets \Call{AssignShard}{x^{(0)}}$ \Comment{Target shard for this batch}
\State $L_{\text{edit}} \gets \Call{AutoregCE}{B}$ \Comment{Autoregressive cross-entropy}
\State $L_{\text{KD}} \gets \Call{IntraBatchKD}{B,\lambda,\vartheta,T}$ \Comment{Eq.(4); optional soft KD}
\State $L_{\text{act}} \gets \Call{RoutingMargin}{B,\gamma_1,\gamma_2,\gamma}$ \Comment{Eq.(7)}
\State $L_{\text{batch}} \gets L_{\text{edit}} + \lambda_a L_{\text{act}} + L_{\text{KD}}$
\State \Call{MaskedUpdate}{$W'_{v,i}, M_i, L_{\text{batch}}$} \Comment{$W'_{v,i}\leftarrow W'_{v,i}-\eta(M_i\odot \nabla L)$}
\State \Call{FilterAndRecluster}{$B,\epsilon_{\text{cons}},\mathcal{R}$} \Comment{Move high-$L_{\text{KD}}$ samples to residual pool}
\EndFor
\State $(\hat{y},c) \gets \Call{Evaluate}{x_e,y_e}$ \Comment{$c\in\{0,1\}$ indicates success}
\If{$c=0$} \State $\mathcal{E}\gets \mathcal{E}\cup\{(x_e,y_e)\}$ \EndIf
\If{$|\mathcal{E}|>\tau_E$ \textbf{ or } $\max_i \Call{ErrorRate}{\mathcal{E},i}>\tau_{\text{prune}}$}
\State \Call{ReTrigger}{$\mathcal{E}$} \Comment{Prune worst shard, rebuild, and retrain}
\EndIf
\EndFor
\State \Call{LossAwareTIESMerge}{$\{W'_{v,i}\}_{i=1}^K,~W_v$} \Comment{Loss-aware weighted TIES merge}
\end{algorithmic}
\end{algorithm}

\begin{algorithm}[htbp]
\caption{REPAIR Inference with Dual-Memory Routing}
\begin{algorithmic}[1]
\Function{RouteAndPredict}{$x$}
\State compute $a(x)\gets \Call{FFNActivation}{x}$  \Comment{Activation $a(x)$ at the target FFN layer}
\For{$i=1..K$} \State $\Delta^{(i)}_{\text{act}}(x)\gets \|\,a(x)\cdot (W'_{v,i}-W_v)\,\|_2$ \EndFor
\If{$\max_i \Delta^{(i)}_{\text{act}}(x)\le \epsilon$}
\State \Return $f_{\theta_0}(x; W_v)$  \Comment{Route to main memory}
\Else
\State $i^\star\gets \arg\max_i \Delta^{(i)}_{\text{act}}(x)$
\State \Return $f_{\theta_0}(x; W'_{v,i^\star})$  \Comment{Route to side memory $i^\star$}
\EndIf
\EndFunction
\end{algorithmic}
\end{algorithm}

\begin{algorithm}[t]
\caption{Training Subroutines}
\small
\begin{algorithmic}[1]

\Function{AssignShard}{$x$}
\State $a\gets \Call{FFNActivation}{x}$;\; $\Delta^{(i)}\gets \|a\cdot (W'_{v,i}-W_v)\|_2,\;i=1..K$
\State \Return $\arg\max_i \Delta^{(i)}$  \Comment{Use the most active shard during training}
\EndFunction

\Function{FormBatches}{$S,~b$}  \Comment{Distribution-aware batching}
\State $o_i \gets \mathrm{Norm}\!\big(\mathrm{ModelFeat}(x^{(i)})\big)$ for $i=0,\ldots,b-1$
\State Greedy seeding: pick $x^{(0)}=\arg\max_{x\in S}\frac{1}{|S|}\sum_{x'} \cos(o(x),o(x'))$
\State Build $B\gets \{x^{(0)}\}\cup\text{Top-(}b{-}1\text{)}$ nearest by cosine; remove $B$ from $S$
\State Repeat until $S$ is empty; \Return list of batches $\mathcal{B}$
\EndFunction

\Function{AutoregCE}{$B$} \Comment{Autoregressive edit loss $L_{\text{edit}}$}
\State $L\gets 0$
\For{$x\in B$ with target sequence $y$}
\State $L \gets L - \sum_{t=1}^{|y|}\log p_\theta(y_t\mid y_{<t},x)$
\EndFor \State \Return $L/|B|$
\EndFunction

\Function{IntraBatchKD}{$B,\lambda,\vartheta,T$}  \Comment{Eq.(4); optional soft-KD}
\State Compute $o_i \gets \mathrm{Norm}\!\big(\mathrm{ModelFeat}(x^{(i)})\big)$ for $i=0,\ldots,b-1$
\State $L_{\text{cos}}\gets \frac{1}{b-1}\sum_{i=1}^{b-1}\left(1-\frac{o_i^\top o_0}{\|o_i\|\|o_0\|}\right)$
\State $o_{\text{mean}}\gets \frac{1}{b}\sum_{i=0}^{b-1}o_i$;\; $L_{\text{var}}\gets \frac{1}{b}\sum_{i=0}^{b-1}\|o_i-o_{\text{mean}}\|_2^2$
\State $L\gets \lambda\,L_{\text{cos}}+\vartheta\,L_{\text{var}}$
\If{$T>0$} \Comment{Optional: KL distillation for added stability}
\State Get logits $z_i$; $p_i=\operatorname{softmax}(z_i/T)$;\; $L \gets L + \frac{1}{b-1}\sum_{i=1}^{b-1}\mathrm{KL}(p_0\|p_i)$
\EndIf
\State \Return $L$
\EndFunction

\Function{RoutingMargin}{$B,\gamma_1,\gamma_2,\gamma$} \Comment{Eq.(7)}
\State $L\gets 0$
\For{each edit sample $x_e\in B$}
\State sample unrelated $x_i$;\; compute $\Delta_e=\Call{ActDelta}{x_e}$, $\Delta_i=\Call{ActDelta}{x_i}$
\State $L\gets L+\max(0,\Delta_i-\gamma_1)+\max(0,\gamma_2-\Delta_e)+\max(0,\gamma-(\Delta_e-\Delta_i))$
\EndFor \State \Return $L/|B|$
\EndFunction

\Function{MaskedUpdate}{$W'_{v,i}, M_i, L$} \Comment{Masked gradient to reduce cross-shard interference}
\State $g\gets \nabla_{W'_{v,i}} L$; \; $g_{\text{m}}\gets M_i\odot g$  \Comment{$M_i\in\{0,1\}^{\text{shape}(W_v)}$}
\State $W'_{v,i}\gets \text{OptimizerStep}(W'_{v,i}, g_{\text{m}})$ \Comment{SGD/Adam, etc.}
\EndFunction

\Function{FilterAndRecluster}{$B,\epsilon_{\text{cons}},\mathcal{R}$}
\For{$x\in B$} \State $\ell_{\text{KD}}(x)\gets$ per-sample KD vs. $x^{(0)}$
\If{$\ell_{\text{KD}}(x)\ge \epsilon_{\text{cons}}$} \State move $x$ to $\mathcal{R}$ \EndIf
\EndFor \State \Return
\EndFunction

\Function{Evaluate}{$x_e,y_e$}
\State $\hat{y}\gets \Call{RouteAndPredict}{x_e}$;\; $c\gets \mathbf{1}[\hat{y}=y_e]$
\State \Return $(\hat{y},c)$
\EndFunction

\end{algorithmic}
\end{algorithm}

\begin{algorithm}[t]
\caption{Utility Functions}
\begin{algorithmic}[1]

\Function{ErrorRate}{$\mathcal{E},i$} \Comment{Error rate for shard $i$}
\State $\mathcal{E}_i\gets \{x\in \mathcal{E}\mid \arg\max_j \Delta^{(j)}_{\text{act}}(x)=i\}$
\State $r_i \gets \frac{|\{x\in \mathcal{E}_i \mid \Call{Correctness}{x}\le \tau_{\text{correct}}\}|}{|\mathcal{E}_i|}$
\State \Return $r_i$
\EndFunction

\Procedure{ReTrigger}{$\mathcal{E}$} \Comment{Closed-loop pruning and retraining}
\State $j\gets \arg\max_i \Call{ErrorRate}{\mathcal{E},i}$ \Comment{Identify worst-performing shard}
\State Remove or reinitialize shard $j$: $W'_{v,j}\gets W_v+\sigma_{\text{init}}\cdot\mathcal{N}(0,1)$; resample $M_j$
\State Build $\mathcal{E}_{\text{retrain}}$ from $\mathcal{E}$; form batches; retrain shards via \Call{MaskedUpdate}{} + \Call{IntraBatchKD}{}
\EndProcedure

\Function{LossAwareTIESMerge}{$\{W'_{v,i}\},W_v$} \Comment{Loss-aware weighted TIES merge}
\State For each shard $i$: $\tau_i\gets W'_{v,i}-W_v$;\; compute training loss $L_i$ on its assigned data
\State $w_i\gets \frac{e^{-\alpha L_i}}{\sum_j e^{-\alpha L_j}}$
\For{each parameter index $p$}
\State $S\gets \{(i,\tau_i[p],w_i)\}_{i=1}^K$
\If{all $\tau_i[p]$ share the same sign}
\State $\delta[p]\gets \sum_i w_i\,\tau_i[p]$ \Comment{Consistent signs: weighted sum}
\Else
\State $i^\star\gets \arg\max_i \{w_i\,|\tau_i[p]|\}$;\; $\delta[p]\gets \tau_{i^\star}[p]$ \Comment{Conflict: keep most trustworthy shard}
\EndIf
\EndFor
\State $W_v\gets W_v+\delta$;\; \Return $W_v$
\EndFunction

\Function{FFNActivation}{$x$} \State \Return activation $a(x)$ at the target FFN layer \EndFunction
\Function{ActDelta}{$x$} \State \Return $\max_i \|a(x)\cdot (W'_{v,i}-W_v)\|_2$ \EndFunction
\Function{ModelFeat}{$x$} \State \Return feature used for similarity (e.g., $a(x)$ or last-token state) \EndFunction
\Function{Norm}{$v$} \State \Return $v/\|v\|_2$ \EndFunction
\Function{Correctness}{$x$} \State \Return predicted correctness score for $x$ \EndFunction

\end{algorithmic}
\end{algorithm}

\end{document}